\newtheorem{theorem}{Theorem}
\newtheorem{lemma}{Lemma}
\newcommand{\bbR}{\mathbb{R}} 
\newcommand{\bbE}{\mathbb{E}} 
\newtheorem*{claim*}{Claim} 
\newtheorem*{theorem*}{Theorem} 
\newtheorem*{corollary*}{Corollary}
\newtheorem{corollary}{Corollary}
\newtheorem{proposition}{Proposition}
\begin{document}
\runningauthor{Kaan Ozkara, Tao Yu, Youngsuk Park}

\twocolumn[

\aistatstitle{Stochastic Rounding for LLM Training: Theory and Practice}

\aistatsauthor{ Kaan Ozkara \textsuperscript\textdagger \And Tao Yu \And  Youngsuk Park }


\aistatsaddress{ UCLA \\ kaan@ucla.edu \And  AWS AI \\ taou@amazon.com \And AWS AI \\ pyoungsu@amazon.com } ]

\begin{abstract}
  As the parameters of Large Language Models (LLMs) have scaled to hundreds of billions, the demand for efficient training methods—balancing faster computation and reduced memory usage without sacrificing accuracy—has become more critical than ever. In recent years, various mixed precision strategies, which involve different precision levels for optimization components, have been proposed to increase training speed with minimal accuracy degradation. However, these strategies often require manual adjustments and lack theoretical justification. In this work, we leverage stochastic rounding (SR) to address numerical errors of training with low-precision representation. We provide theoretical analyses of implicit regularization and convergence under the Adam optimizer when SR is utilized. With the insights from these analyses, we extend previous BF16 + SR strategy to be used in distributed settings, enhancing the stability and performance for large scale training. Empirical results from pre-training models with up to 6.7B parameters, for the first time, demonstrate that our BF16 with SR strategy outperforms (BF16, FP32) mixed precision strategies, achieving better validation perplexity, up to 1.54× higher throughput, and 30\% less memory usage. 
\end{abstract}

\section{Introduction}

With the increased scale of LLMs, mixed precision (MP) training strategies \citep{micikevicius2018mixedprecisiontraining} has become the \textit{de facto} training strategy. Compared to FP32 training, MP vastly increases the throughput without degrading the accuracy of the model by delegating some or most operations to 16-bit depending on the strategy used. On the other hand, vanilla BF16 training, which is even more efficient and faster than MP as every tensor and operations are in BF16 (with nearest rounding behavior for FP operations), has been shown to cause significant degradation in accuracy (see \cref{fig:350M val} and \citep{zamirai2021revisitingbfloat16training,rae2022scalinglanguagemodelsmethods}).  


Previous efforts \citep{zamirai2021revisitingbfloat16training,rae2022scalinglanguagemodelsmethods} have proposed changing the default nearest rounding (NR) behavior of some operators to stochastic rounding (SR) \citep{croci2022stochastic}, an unbiased estimator for quantization. Utilizing SR in the model update step of BF16 training improves the trained model performance compared to vanilla BF16 approach. Recent accelerators such as AWS Trainium instances enable SR for all downcasting operations (e.g. in GEMMs) in hardware level \citep{fan2024hlat,trainium, muhamed2023training}. However, there is still a clear performance gap between BF16+SR and MP for LLM training (for instance a 7\% degradation in validation perplexity for BERT-base \citep{zamirai2021revisitingbfloat16training}, and 2\% degradation in validation loss for a 420M parameter decoder model \citep{rae2022scalinglanguagemodelsmethods}). These works have used the same hyper-parameters for MP and SR training for a direct comparison.

\begin{figure*}[t]
    \begin{minipage}[t]{.45\linewidth}
    \vspace{20pt}
    \centering
         \begin{tabular}{lccc} \hline 
         & BF16+SR & (BF16,FP32) MP  & BF16  \\ \hline
        Accuracy & {\color{teal}$\boldsymbol{\uparrow}$}* & $\uparrow$ & {\color{red}$\downarrow$} \\
        Throughput & {\color{teal}$\boldsymbol{\uparrow}$} \ &  $\uparrow$ &  {\color{teal}$\boldsymbol{\uparrow}$}  \\
        Memory eff. &  {\color{teal}$\boldsymbol{\uparrow}$} \ & {\color{red}$\downarrow$}  & {\color{teal}$\boldsymbol{\uparrow}$} \\
        Robustness & {\color{teal}$\boldsymbol{\uparrow}$}* & {\color{red}$\downarrow$}  & {\color{red}$\downarrow$} \\ \hline
    \end{tabular}
  \captionof{table}*
      {
       \small{* denotes first shown in this work.}
      }
    \end{minipage}\hspace{35pt}
    \begin{minipage}[t]{.45\linewidth}
    \vspace{0pt}
        \centering
        \includegraphics[scale=0.4]{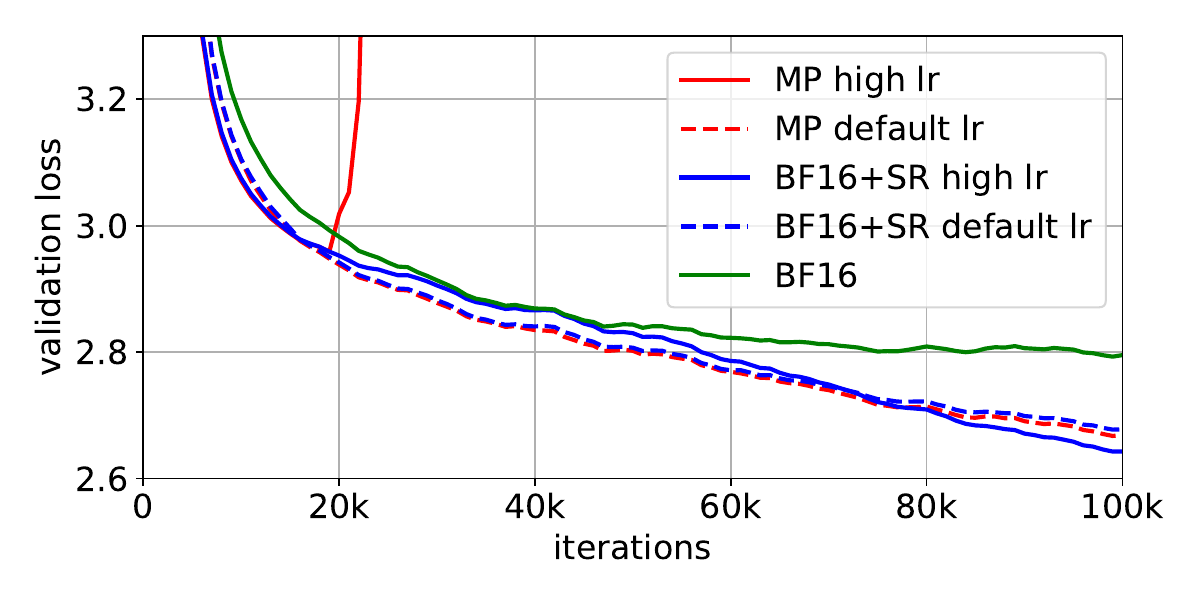}
    \end{minipage}
    \caption{(Left) Performance summary of competing methods. Our BF16+SR strategy shows better perplexity, faster throughput, less memory usage and more robustness towards high learning rates compared to the state of the art mixed precision methods. (Right) Validation losses of competing methods while pre-training GPT-2 (350M). When employed with a relatively high learning rate, training with SR outperforms mixed precision strategy. If the same high learning rate is used for the mixed precision training we observe divergent training, which indicates robustness of SR to higher learning rates. } \label{fig:350M val}
\end{figure*}

In this work, our primary goal is to understand the practical and theoretical properties of stochastic rounding (SR)
in training of LLMs. We hypothesize that 
as stochastic rounding provides an unbiased estimator, it leads to improved performance in training with less error accumulated. However, stochastic and nearest rounding's interactions with the widely adopted Adam optimizer \citep{KingmaB14} have not been thoroughly examined in the literature. This lack of understanding presents an opportunity to explore how different rounding methods affect convergence and overall training dynamics. By investigating these questions, we aim to provide valuable insights, both practically--by improving training efficiency and accuracy--and theoretically--by advancing the knowledge of optimizer behavior under varying rounding conditions.

We reveal that, 
SR shows a lower quantization error compared to NR in the convergence bound. Empirically, we show that BF16+SR strategy can outperform state of the art (BF16, FP32) MP training in terms of accuracy for training billion scale LLMs (up to 7B in our experiments), while retaining all the advantages of BF16 training. Our detailed contributions are as follows:

\begin{itemize}
    \item Theoretically, we analyze convergence properties of SR \& NR in Adam in a non-convex setting (applied to the update step), and conclude that the additional convergence error due to SR is strictly smaller than error when NR is used. With appropriate choice of hyper-parameters (e.g. high learning rate) quantization error due to SR can be subsumed by Adam's original convergence bound.  
    \item We demonstrate that training with SR implicitly regularizes the loss function by a quantization error penalty term. By examining training with small learning rates, we find that the penalty term may dominate and result in stagnation, which highlights the necessity of using higher learning rates for effective SR training.
    \item We propose a BF16 AdamW optimizer \citep{loshchilov2018decoupled}, with SR applied to the model update step. This optimizer for the first time empirically outperforms mixed precision training, with higher throughput and less memory usage in training billion parameter language models. 
\end{itemize}


\textbf{Outline. } In \cref{sec:mot and back}, first, we introduce the problem and necessary backgrounds on rounding options. Then, we motivate the use of stochastic rounding and examine the effect of learning rate. Next, in \cref{sec:algorithm}, we introduce the BF16-AdamW-SR algorithm utilizing stochastic rounding. In \cref{sec:theory}, we analyze the implicit regularization effect of SR, and compare the convergence behavior of AdamW under stochastic, and nearest rounding. Lastly, in \cref{sec:experiments}, we present our experimental results showcasing superior efficiency and efficacy of training with BF16+SR compared to mixed precision (BF16, FP32) and (nearest rounding) BF16 training schemes. Proofs of results, additional experiments, and details can be found in appendices.

\subsection{Related works}

\textbf{Stochastic rounding.} SR is an alternative rounding approach to the canonical nearest rounding (NR) where high precision number is quantized to the nearest low precision value. While NR has the best worst error guarantees, it is biased. SR on the other hand, is an unbiased quantizer, as the probability of quantization is inversely proportional to the distance to low precision values, which makes it more suitable for training ML models to reduce accumulating errors across iterations \cite{gupta2015deep}. 
In particular, \cite{connoly2021srprob} shows that SR has better probabilistic error bounds when summing small numbers, as in deep learning applications, compared to nearest rounding. \cite{li2017training,xia2023influencestochasticroundofferrors} analyze error terms due to SR in gradient descent and show that for simple machine learning tasks (e.g. regression) SR gives competitive performance. \cite{zaheer2018adaptive,rae2022scalinglanguagemodelsmethods} apply SR for training $\leq$1B scale LLMs and argue that there is a performance discrepancy compared to mixed precision training or FP32 training.

\textbf{Efficient LLM training.} Traditional machine learning training algorithms generally rely on 32-bit representations (FP32) to avoid perturbing the training performance. 
Recently, many works have been proposed to lower the training costs without accuracy drop using various data types, 
such as FP16 \citep{micikevicius2018mixedprecisiontraining}, BF16 \citep{kalamkar2019studybfloat16deeplearning}, FP8 \citep{wang2018training,sun2019hybrid} and so on.  \cite{micikevicius2018mixedprecisiontraining} realized that FP16 operations can be utilized by keeping a FP32 master copy for accumulating the updates without accuracy degradation. \cite{zamirai2021revisitingbfloat16training} showed that Kahan summation \citep{kahan1997mindless} could be utilized to account for numerical errors of BF16 computations. \cite{yu2024collage} used multiple-component floating-point to reduce errors during BF16 computations. Contrary to previous work, our method is able to match the performance of mixed precision training without introducing any auxiliary variables.

\section{Motivation and Background}\label{sec:mot and back}

In this work, we are interested in minimizing a non-convex loss function $F : \bbR^{d} \rightarrow \bbR$, and in optimization problems of the form
$$    \min_{x \in \mathbb{R}^d} F(x).
$$
We denote by $f : \bbR^{d} \rightarrow \bbR$ a random function computed on a mini-batch sampled from the underlying data distribution. Therefore, for any model parameters $x\in \bbR^{d}$, we have  $\bbE \left[f(x)\right] = F(x)$. We assume $F$ is differentiable and that $\bbE \left[\nabla f(x)\right] = \nabla F(x)$ for all $x\in \bbR^{d}$. We use $f_t$ or $f(x_t)$ to denote the random function evaluated with input batch sampled at time $t$.

\subsection{Stochastic and nearest rounding}

Here we define stochastic rounding and examine why it is more preferred to nearest rounding for training. Nearest (deterministic) rounding can be formalized as,
\begin{equation}
    Q(x):= \mathrm{sign}(x) \cdot \Delta_x \cdot \Big\lfloor \frac{x}{\Delta_x} + \frac{1}{2} \Big\rfloor,
\end{equation}

where $x$ is input with a higher precision compared to output (e.g. 32-bit and 16-bit representations), $\Delta_x = \lceil x \rceil - \lfloor x\rfloor$ is resolution, i.e. the distance between quantization levels, around $x$. Note, for non-uniform data types such as floating point representations the resolution depends on the magnitude of the input, which is denoted by the subscript $\Delta_x$. Nearest rounding is the default rounding mode for (IEEE) FP operations (such as summation/subtraction  and so on); typically, a higher precision buffer is used and then the result is nearest rounded. If the operands are in different precision, the lower precision is upcast and standard FP operations are followed afterwards. Next, we define the stochastic rounding operation,
\begin{equation}
    Q_{SR}(x):=
    \begin{cases}
       \lceil x \rceil , & \text{w.p.}\ \frac{x - \lfloor x \rfloor}{\Delta_x} \\
       \lfloor x\rfloor , & \text{w.p.}\ \frac{\lceil x \rceil - x}{\Delta_x}
    \end{cases}.
\end{equation}

The crucial property of SR is, contrary to nearest rounding, unbiasedness, $\bbE[Q_{SR}(x)]=x$. Unbiasedness property prevents stagnation (or imprecision), in expectation, particularly when two numbers of different magnitude is summed in the update step. As an example, consider the gradient step for the quantized model in the form $x_{t+1}=x_{t}+u$ where $u$ is the update that depends on learning rate and gradient information. When $u \ll x_{t}$, we have $Q(x_{t}+u) = x_{t}$; hence, stagnation occurs. However, with SR we have $\bbE [Q_{SR}(x_{t}+u)] = x_{t}+u$. 

\subsection{Effect of learning rate under SR}\label{sec:lr with sr}


\citet{zamirai2021revisitingbfloat16training,rae2022scalinglanguagemodelsmethods} have attempted at BF16 training with SR but found out there is a performance gap with mixed precision training. Here, we present toy examples where this gap becomes more pronounced with a small learning rate, suggesting the necessity of using a higher learning rate for SR training.
We model SR as a random walk and see that with small updates SR can take a long time to converge compared to higher precision training.


\begin{figure}[h]
    \centering
    \includegraphics[width=1.0\linewidth]{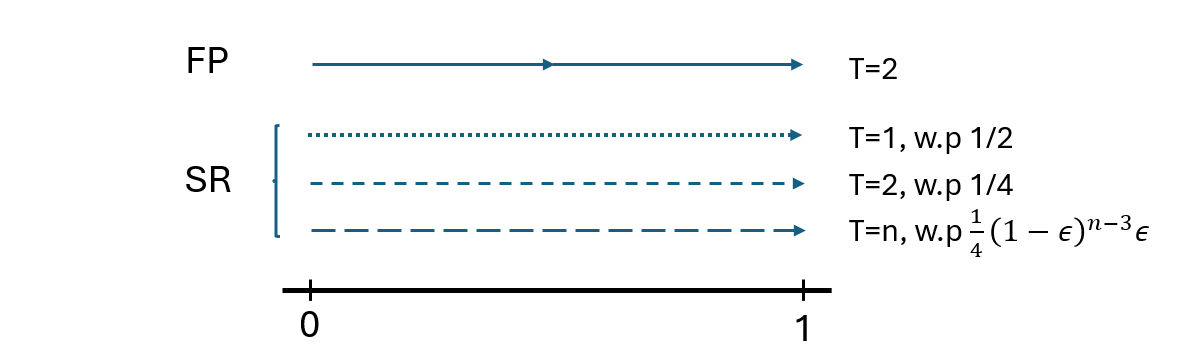}
    \caption{Depiction of updates when full precision and quantized stochastic rounding updates are used in the toy example.}
    \label{fig:random_walk}
\end{figure}

\textbf{Example. }Let $x_0 = 0$, we are interested in expected number of iterations $\bbE[T]$ where $T: x_T=1$ assuming 0 and 1 are consecutive quantization levels. Let the updates be $u_t=1/2$ for $t=0,1$ and $u_t = \epsilon$ for $t>2$. Note, with accurate full precision updates $x_2 = 1$, the next level is achieved in 2 steps. Whereas, for stochastic updates $T\leq 2$ w.p. $\frac{3}{4}$, and $T=n, n \geq 3$ w.p. $\frac{1}{4}(1-\epsilon)^{n-3}\epsilon$ (as seen in \cref{fig:random_walk}). Consequently, we have $\bbE[T]>2$ for $\epsilon<\frac{1}{2}$; and $\bbE[T] \to \infty$ for decaying updates $\epsilon \to 0$. 

\textbf{Example. } We also present a toy simulation example with a linear function. Consider a linear loss function,
\[
f(x) =
  \begin{cases}
   -(x - 2) + 1 & \text{for } x < 3, \\
   0 & \text{otherwise},
  \end{cases}
\]
iterated over with Adam optimizer using a decaying learning rate with minimum learning rate $1 \times 10^{-5}$. When we initialize from $x_0 = 2$, FP32 updates converge in 4600 steps; whereas the BF16 stochastic rounding updates converge in average in 7700 steps (plots provided in \cref{app:experiments}). And when the minimum learning rate is $5 \times 10^{-5}$ SR converges in $\sim$4600 steps.


The toy examples indicate that SR can still result in stagnation. This stagnation can be prevented when the updates are larger. To this end, in the next sections we will conclude, both theoretically and empirically, choosing a higher learning rate for SR training is critical for a competitive performance. A caveat is, in practice, Adam is known to diverge with higher learning rates; we find that SR alleviates this in \cref{sec:high lr} and \cref{sec:experiments}.


\subsection{Robustness of Adam with SR} \label{sec:high lr}
The examples so far implied that a higher learning rate is necessary for SR to be successfully employed. In practice, a higher learning rate results in unstable training of LLMs, even divergence. Such unstable training is associated with time-domain correlation of gradients in training with Adam optimizer. In particular, \citep{molybog2023adaminstability} shows that as time-domain correlation increases, a lower learning rate is required for training. In \cref{sec:experiments}, we observe that training with SR is more robust towards higher learning rates compared to BF16 nearest rounding training and (BF16-FP32) mixed precision training. We conjecture that additive noise while training with SR enables a learning rate-robust training by decorrelating the gradients. To formalize, we have the following proposition that demonstrates the decorrelation effect of SR in the simple toy setting in \cite{molybog2023adaminstability},
\begin{proposition}
    Assume that gradient estimated at dimension $i\in\{1,\ldots,d\}$ and $t\in[1,\ldots, T]$ is composed of two Bernoulli parts $g[i,t] = g^{(1)}[i] + g^{(2)}[i,t]$, where $g^{(1)}[i] \sim \rho \times \text{Bernoulli}(\frac{1}{2})$ and $g^{(2)}[i,t] \sim \text{Bernoulli}(\frac{1}{2})$ are independent. For simplicity, we model SR perturbation as an additional independent term $\xi[i,t] \sim \Delta \times \text{Bernoulli}(\frac{1}{2})$, where $\Delta$ denotes aggressiveness of quantization (e.g. resolution); and $g_{SR}[i,t] = g^{(1)}[i] + g^{(2)}[i,t] + \xi[i,t]$. Then correlation among gradients across time-domain while training with SR is $\mathrm{cor}_{SR} = \frac{\rho^2}{1+\rho^2+\Delta^2}$, and without SR is $\mathrm{cor} = \frac{\rho^2}{1+\rho^2}$.  
\end{proposition}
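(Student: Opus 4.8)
The plan is to compute the Pearson correlation directly from its definition, exploiting the independence of the three Bernoulli components. Fix a coordinate $i$ and two distinct time indices $t \neq t'$; since the three components are independent and the law of $g[i,t]$ does not depend on $t$, the quantity $\mathrm{cor} := \mathrm{Cov}(g[i,t],g[i,t']) \big/ \sqrt{\mathrm{Var}(g[i,t])\,\mathrm{Var}(g[i,t'])}$ is well-defined and identical for every such pair, and likewise for the SR-perturbed gradients $g_{SR}$. So it suffices to compute one cross-time covariance and one (stationary) variance in each case.

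First I would record the elementary second moments. For $B \sim \mathrm{Bernoulli}(\tfrac12)$ one has $\mathrm{Var}(B) = \tfrac14$, hence $\mathrm{Var}(g^{(1)}[i]) = \tfrac{\rho^2}{4}$, $\mathrm{Var}(g^{(2)}[i,t]) = \tfrac14$, and $\mathrm{Var}(\xi[i,t]) = \tfrac{\Delta^2}{4}$. By independence of the components, $\mathrm{Var}(g[i,t]) = \tfrac{\rho^2 + 1}{4}$ and $\mathrm{Var}(g_{SR}[i,t]) = \tfrac{\rho^2 + 1 + \Delta^2}{4}$.

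Next I would compute the cross-time covariance by bilinearity. Since $g^{(2)}[i,t]$ and $g^{(2)}[i,t']$ are independent of each other (as $t \neq t'$) and of $g^{(1)}[i]$, the only surviving term is the one from the shared component $g^{(1)}[i]$, giving $\mathrm{Cov}(g[i,t],g[i,t']) = \mathrm{Var}(g^{(1)}[i]) = \tfrac{\rho^2}{4}$. Adding the SR noise does not change the numerator: $\xi[i,t]$ and $\xi[i,t']$ are independent across time and of everything else, so $\mathrm{Cov}(g_{SR}[i,t],g_{SR}[i,t']) = \tfrac{\rho^2}{4}$ as well. Forming the two ratios, the common factor $\tfrac14$ cancels, yielding $\mathrm{cor} = \frac{\rho^2}{1+\rho^2}$ and $\mathrm{cor}_{SR} = \frac{\rho^2}{1+\rho^2+\Delta^2}$, which is the claim.

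There is no genuine obstacle here — the statement is an exercise in second moments — so the only thing to be careful about is making the modeling conventions explicit: that ``time-domain correlation'' refers to distinct $t \neq t'$ at a fixed coordinate $i$, and that the $\xi$ perturbations are i.i.d.\ across $(i,t)$, so they inflate the variance (the denominator) without contributing to the cross-time covariance (the numerator). This asymmetry is precisely the decorrelation mechanism the proposition is meant to illustrate, and I would state it as a one-line remark after the computation.
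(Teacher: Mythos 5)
Your computation is correct: with the shared component $g^{(1)}[i]$ supplying the entire cross-time covariance $\rho^2/4$ and the idiosyncratic terms $g^{(2)}$ and $\xi$ inflating only the variance, the Pearson correlation for $t\neq t'$ is $\rho^2/(1+\rho^2)$ without SR and $\rho^2/(1+\rho^2+\Delta^2)$ with SR, exactly as claimed. The paper does not spell out a proof of this proposition (it leans on the toy-model correlation computation of \citet{molybog2023adaminstability}), and your direct second-moment argument is precisely the intended one, including the correct reading of ``time-domain correlation'' as the correlation at a fixed coordinate across distinct time indices.
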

Clearly, $\mathrm{cor} > \mathrm{cor}_{SR}$ and $\Delta$ reduces the correlation, hence, increases stability. Note, in practice, $\xi[i,t]$ is dependent on the gradients but there is no clear correlation. Our result holds as long as $\xi$ is not negatively correlated with gradients. 

\section{Adam Optimizer with SR} \label{sec:algorithm}

\begin{algorithm} 
\caption{AdamW with Stochastic Rounding and Shared Randomness} \label{alg}
\begin{algorithmic}[1]
\REQUIRE Learning rate $\alpha_t$, beta parameters $(\beta_1, \beta_2)$, weight decay $\lambda$, reduced gradient $\nabla f(x_t)$ at iteration $t$, number of devices $M$, $r^m_t$ random states for each device, $r^{(opt)}_t$ random state for the optimizer.
\STATE $c_{1,0} \gets 1$, $c_{2,0} \gets 1$
\FOR{$t=1$ to $T$}
\FOR{$m=1$ to M}
    \STATE $g_{t} \gets \nabla f(x_t)$ ~~~~~~~~\#\# reduced gradient  
    \STATE $m_{t+1} \gets \beta_1 \cdot m_t + (1 - \beta_1) \cdot g_t$
    \STATE $v_{t+1} \gets \beta_2 \cdot v_t + (1 - \beta_2) \cdot g_t^2$
    \STATE $\hat{m}_{t+1} \gets \frac{m_{t+1}}{1 -( \beta_1)^t}$
    \STATE $\hat{v}_{t+1} \gets \sqrt{\frac{v_{t+1}}{1 - (\beta_2)^t}}$
    \STATE $r \gets r^{(opt)}_t$ \#\# set the same random state across ranks for optimizer
    \STATE $x_{t+1} \gets Q_{SR}\Big(x_t - \Big( \alpha_t \cdot \frac{\hat{m}_{t+1}}{\sqrt{\hat{v}_{t+1} + \epsilon}} + \alpha_t \cdot \lambda \cdot x_t \Big)\Big)$ \#\# $r^{(opt)}_t$ is updated during SR
    \STATE $r \gets r^{m}_t$ \#\# get the original random state
\ENDFOR
\ENDFOR
\RETURN $x_T$
\end{algorithmic}
\end{algorithm}




In this section, we present the AdamW algorithm with SR applied to the update step and shared randomness. \cref{alg} assumes a distributed setup that consists of $M$ model replicas. In our experiments, every tensor in \cref{alg} is in BF16 data type. At the start of every iteration, the devices receive reduced (aggregated) mini-batch gradients (line 4); Using the gradients, the optimizer states of Adam is updated (line 5,6). Before the update step, (line 9) we fork the random state such that every rank has the same randomness; otherwise, model replicas drift from each other, since same input in line 10 could be quantized to different model weights due to SR. As models diverge from each other, the computed/aggregated gradients deviates from its expectations,
which will disrupt the training. 

The model update (line 10) with SR is done by \emph{dithering} \citep{Schuchman1964}, which temporarily upcasts to FP32 and adds with a uniform FP32 noise to the mantissa bits, and finally shifting out the fractional part; this implementation is standard and practically has nearly no overhead on throughput (also simulates the hardware implementation e.g. \citep{trainium}). Since the update does not coincide with the forward pass, where the memory usage is the highest, temporary FP32 tensors do not affect the peak memory occupied during training. Consequently, BF16 training and BF16+SR training report similar throughput and memory usage. Note that mixed precision training typically uses FP32 optimizer states (lines 5,6) storage \& FP32 gradient communication (line 5), and keeps the master weights in FP32 (line 10). In contrast, our method adopts BF16 for all these variables. 


For our theoretical result in \cref{sec:theory}, we consider operations in lines 4,5,6 to be computed in full precision for a simple analysis. We ablate the precision of these operations in \cref{app:experiments}, where the performance difference with them in high precision vs. low precision is virtually non-existent. We further provide a proof sketch for the case where, for instance line 6 is also in low precision in \cref{app:convergence}.



\section{Theoretical Analysis} \label{sec:theory}

In this section, we start with implicit regularization effect of SR when combined with gradient descent. We utilize gradient descent for simplicity, general insights would translate to other first order optimization methods as well. Then, we move onto convergence properties of Adam when SR is used for parameter updates. 

\subsection{Implicit regularization under SR}
Gradient descent is assumed to discretize a continuous function $x(t)$ which is defined by the ODE $\dot{x}(t) = l(x(t)) := -\nabla F(x(t))$ (also called gradient flow). 
The reason for discretization is because gradient updates $x_{t+1}=x_t - \alpha \nabla F(x(t))$ are not computed continuously but only at discrete points. This results in modified gradient flow: $\dot{x}(t) = \tilde{l}(x(t))$, where $\tilde{l}(x(t)) := l(x(t)) + \alpha l_1(x(t)) + \alpha^2 l_2(x(t)) + \ldots$, here $l_i$ are error terms due to discrete approximation of gradient descent of the continuous function \cite{barrett2020implicit,smith2021on}. Note that as $\alpha \rightarrow 0$ we have continuous updates, i.e. $\tilde{l}(x(t)) \rightarrow l(x(t))$. Stochastic rounding (SR) introduces an additional error term such that the first order update is $\hat{l}(w) := l(w) +  {\xi}_{\alpha}(w)$. Hence, the modified flow becomes $\tilde{l}(x(t)) := \hat{l}(w) + \alpha \hat{l}_1(x(t)) + \alpha^2 \hat{l}_2(x(t)) + \mathcal{O}(\alpha^3)$ where $\hat{l}_i$ absorbs the perturbed higher order terms due to SR 
and $\xi_\alpha(x)$ defined as a Bernoulli random variable:

\begin{lemma}[Effective gradient perturbation] \label{lemma:xi}
\begin{equation*}
    \xi_\alpha(x)=
    \begin{cases}
       &\frac{1}{\alpha}\Big(\lceil x-\alpha\nabla F(x) \rceil - (x-\alpha \nabla F(x)) \Big) \\
       & \qquad \text{w.p.}\ \frac{x-\alpha\nabla F(x) - \lfloor x-\alpha\nabla F(x) \rfloor}{\Delta_x}, \\
       &\frac{1}{\alpha}\Big(\lfloor x-\alpha\nabla F(x)\rfloor - (x-\alpha\nabla F(x))\Big) \\
       & \qquad \text{w.p.}\ \frac{\lceil x-\alpha\nabla F(x) \rceil - (x-\alpha\nabla F(x))}{\Delta_x},
    \end{cases}
\end{equation*}

where $\Delta_x= \lceil x - \alpha \nabla F(x) \rceil - \lfloor x - \alpha \nabla F(x) \rfloor$ is the difference between quantization levels. Note that $\mathbb{E}[\xi_\alpha(x)]=0$ and variance is $\bbE[\|\xi_{\alpha}(x)\|^2] = \frac{1}{\alpha^2}\Big(\lceil (x-\alpha\nabla F(x) \rceil - (x-\alpha \nabla F(x)) \Big)^\top \Big(\lfloor (x-\alpha\nabla F(x)\rfloor - (x-\alpha\nabla F(x)\Big)$.
\end{lemma}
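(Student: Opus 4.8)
The plan is to obtain $\xi_\alpha$ by reading it directly off the stochastic-rounding step and then to compute its first two moments by enumerating the two possible rounding outcomes. Write $y := x - \alpha\nabla F(x)$ for the pre-rounding gradient-descent iterate. In the gradient-flow setting of this subsection one update of quantized gradient descent reads $x^{+} = Q_{SR}(y)$, so the rounding residual is $x^{+} - y = Q_{SR}(y) - y$, and I would \emph{define} the effective gradient perturbation as $\xi_\alpha(x) := \tfrac1\alpha\big(Q_{SR}(y) - y\big)$ --- precisely the term making $x^{+} = x + \alpha\big(l(x) + \xi_\alpha(x)\big)$ with $l(x) = -\nabla F(x)$, i.e. the decomposition $\hat l(w) = l(w) + \xi_\alpha(w)$ used above. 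Substituting the definition of $Q_{SR}$, namely $Q_{SR}(y) = \lceil y\rceil$ with probability $(y-\lfloor y\rfloor)/\Delta_x$ and $Q_{SR}(y)=\lfloor y\rfloor$ with probability $(\lceil y\rceil - y)/\Delta_x$ where $\Delta_x = \lceil y\rceil - \lfloor y\rfloor$ (all coordinatewise), reproduces verbatim the two-case description of $\xi_\alpha(x)$ in the statement.

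For the mean, I would weight the two outcomes by their probabilities and use $(y-\lfloor y\rfloor) + (\lfloor y\rfloor - y) = 0$: the contribution factors as $\tfrac1{\alpha\Delta_x}(\lceil y\rceil - y)\big[(y-\lfloor y\rfloor)+(\lfloor y\rfloor - y)\big] = 0$, so $\mathbb{E}[\xi_\alpha(x)] = 0$; equivalently this is just the unbiasedness $\mathbb{E}[Q_{SR}(y)] = y$ already recorded in \cref{sec:mot and back}. For the second moment, the same enumeration gives, per coordinate, $\tfrac1{\alpha^2}\big[(\lceil y\rceil - y)^2\tfrac{y-\lfloor y\rfloor}{\Delta_x} + (y-\lfloor y\rfloor)^2\tfrac{\lceil y\rceil - y}{\Delta_x}\big]$; factoring out $(\lceil y\rceil - y)(y-\lfloor y\rfloor)/\Delta_x$ and using $(\lceil y\rceil - y) + (y-\lfloor y\rfloor) = \Delta_x$ collapses the bracket to $\Delta_x$, leaving $\tfrac1{\alpha^2}(\lceil y\rceil - y)(y-\lfloor y\rfloor)$. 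Summing over coordinates yields $\mathbb{E}[\|\xi_\alpha(x)\|^2] = \tfrac1{\alpha^2}\big(\lceil y\rceil - y\big)^\top\big(y - \lfloor y\rfloor\big)$, which matches the stated expression (up to the sign convention in writing the second factor as $\lfloor y\rfloor - y$ rather than $y - \lfloor y\rfloor$; note the quantity must be nonnegative since each coordinate contributes $(\lceil y\rceil - y)(y - \lfloor y\rfloor)\ge 0$).

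The computation is elementary, so I do not expect a serious obstacle; the two points I would state carefully are the multidimensional reduction and the degenerate case. For the former, the $\mathbb{E}\|\cdot\|^2$ identity only uses per-coordinate second moments, so it holds irrespective of how the per-coordinate roundings are correlated, and the mean identity likewise holds coordinatewise. For the latter, when $y$ already lies on a quantization grid point one has $\lceil y\rceil = \lfloor y\rfloor$ and the probabilities are a $0/0$ expression; adopting the natural convention that no rounding occurs ($\xi_\alpha(x) = 0$ on such coordinates) is consistent with both moment formulas, since the product $(\lceil y\rceil - y)(y-\lfloor y\rfloor)$ also vanishes there. Finally, I would note that the higher-order terms $\hat l_i$ in the modified flow absorb the $\mathcal O(\alpha^k)$ corrections exactly as in the deterministic backward-error analysis of \citet{barrett2020implicit,smith2021on}, so that $\xi_\alpha$ is the only genuinely new (zeroth-order in $\alpha$, but mean-zero) contribution --- which is what the subsequent implicit-regularization computation exploits.
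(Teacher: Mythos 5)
Your proposal is correct and follows essentially the same route as the paper: $\xi_\alpha$ is defined as the SR rounding residual of $y=x-\alpha\nabla F(x)$ scaled by $1/\alpha$, the two cases come directly from the definition of $Q_{SR}$, and the moments follow by enumerating the two outcomes (the paper states the second moment without the explicit enumeration you give). Your remark about the sign is apt --- the nonnegative form $\frac{1}{\alpha^2}\big(\lceil y\rceil - y\big)^\top\big(y-\lfloor y\rfloor\big)$ is the intended quantity, the statement's $\lfloor y\rfloor - y$ factor being a sign slip --- and your convention $\xi_\alpha=\frac1\alpha(Q_{SR}(y)-y)$ is the one consistent with the decomposition $\hat l = l + \xi_\alpha$ used in the implicit-regularization analysis.
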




Now we present the implicit loss function that is minimized when gradient descent with perturbed updates due to SR is employed.
\begin{theorem} \label{thm:implicit reg}
    Let $F(x)$ be the loss function to be minimized, $\alpha$ be a constant learning rate, $\xi_{\alpha}(w_t)$ be random vector quantization error while doing updates with SR. Then during gradient descent with SR on the loss function, implicitly, the following expected modified loss, $F_{SR}(x)$, is being optimized:
    \begin{align*}
        F_{SR}(x) &:= F(x) + \frac{\alpha}{4} \| \nabla F(x) \|^2 + \frac{\alpha}{4}
        \bbE[\|\xi_\alpha(x)\|^2] 
    \end{align*}
    Furthermore, for $\alpha \rightarrow 0, \alpha > 0$ assuming that terms with first order dependence on $\alpha$ vanishes, we have that the second term vanished and from the third term we are left with $\frac{1}{4}\sum_{i=1}^d\Delta_{x_i}|\nabla F(x)_i| $.
\end{theorem}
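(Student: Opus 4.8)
The plan is to follow the standard backward-error / modified-equation analysis of gradient flow (as in \citet{barrett2020implicit,smith2021on}), carrying along the extra stochastic term $\xi_\alpha$ from \cref{lemma:xi}. First I would write one discrete step of perturbed gradient descent as $x_{t+1} = x_t - \alpha\bigl(\nabla F(x_t) - \xi_\alpha(x_t)\bigr)$ — equivalently $x_{t+1} = x_t + \alpha\,\hat l(x_t)$ with $\hat l(w) = -\nabla F(w) + \xi_\alpha(w)$ — and posit a modified flow $\dot x = \tilde l(x) = \hat l(x) + \alpha\,\hat l_1(x) + \mathcal O(\alpha^2)$ whose time-$\alpha$ map agrees with the discrete step up to $\mathcal O(\alpha^3)$. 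Taylor-expanding the solution of the modified ODE to second order in $\alpha$ and matching coefficients gives the classical identity $\hat l_1 = -\tfrac12 (\partial \hat l)\,\hat l$; taking expectations and using $\bbE[\xi_\alpha]=0$ from \cref{lemma:xi}, the drift of the expected modified flow becomes $-\nabla F - \tfrac{\alpha}{2}\bigl(\nabla^2 F \cdot \nabla F\bigr) + (\text{terms from }\xi)$. Recognizing $\nabla^2 F\cdot\nabla F = \tfrac12\nabla\|\nabla F\|^2$ identifies the $\tfrac{\alpha}{4}\|\nabla F(x)\|^2$ penalty exactly as in the noiseless case.

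Next I would account for the $\xi_\alpha$ contribution to the order-$\alpha$ correction. The cross term between $\nabla F$ and $\xi_\alpha$ in $(\partial\hat l)\hat l$ vanishes in expectation because $\xi_\alpha$ is mean-zero and (treating the rounding randomness as independent of the conditioning) uncorrelated with deterministic quantities at $x_t$; what survives is the self-interaction, which in expectation produces a term proportional to $\bbE[\|\xi_\alpha(x)\|^2]$. The coefficient bookkeeping — that the surviving piece is exactly $\tfrac{\alpha}{4}\bbE[\|\xi_\alpha(x)\|^2]$ — is where I expect to spend the most care: one has to be precise about whether $\xi_\alpha$ is held fixed or re-sampled along the modified flow, and the cleanest route is to interpret $F_{SR}$ as the potential whose gradient flow reproduces the one-step expected update, i.e. verify $\nabla F_{SR}(x) = \nabla F(x) + \tfrac{\alpha}{4}\nabla\|\nabla F(x)\|^2 + \tfrac{\alpha}{4}\nabla\bbE[\|\xi_\alpha(x)\|^2]$ matches the computed expected drift to order $\alpha$. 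This is the main obstacle, since the noise term is not a smooth deterministic forcing and the usual modified-equation derivation must be adapted to hold only in expectation.

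Finally, for the $\alpha\to 0^+$ limit I would plug the explicit variance formula from \cref{lemma:xi} into the third term. Writing $y = x - \alpha\nabla F(x)$, the variance is $\tfrac{1}{\alpha^2}\sum_{i=1}^d\bigl(\lceil y_i\rceil - y_i\bigr)\bigl(y_i - \lfloor y_i\rfloor\bigr)$. Since $\lceil y_i\rceil - y_i$ and $y_i - \lfloor y_i\rfloor$ sum to $\Delta_{x_i}$, their product is $\mathcal O(\Delta_{x_i}\cdot\alpha|\nabla F(x)_i|)$ to leading order when the displacement $\alpha\nabla F(x)_i$ is small relative to the resolution: more precisely, as $\alpha\to 0$ the fractional position of $y_i$ within its cell moves by $\alpha|\nabla F(x)_i|/\Delta_{x_i}$, so to first order in $\alpha$ one of the two factors is $\approx \alpha|\nabla F(x)_i|$ and the other $\approx \Delta_{x_i}$, giving $\bbE[\|\xi_\alpha(x)\|^2]\approx \tfrac{1}{\alpha}\sum_i \Delta_{x_i}|\nabla F(x)_i|$. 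Multiplying by the $\tfrac{\alpha}{4}$ prefactor and discarding the $\tfrac{\alpha}{4}\|\nabla F\|^2$ term (which is genuinely $\mathcal O(\alpha)$) leaves the claimed $\tfrac14\sum_{i=1}^d \Delta_{x_i}|\nabla F(x)_i|$, an $L_1$-type penalty weighted by the local quantization resolution.
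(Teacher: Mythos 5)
Your proposal follows essentially the same route as the paper's proof: a Barrett--Dherin backward-error analysis in which the mean-zero SR perturbation $\xi_\alpha$ is carried through the order-$\alpha$ correction of the modified flow, the cross term is killed by unbiasedness (with expectation and differentiation commuted), and the surviving self-interaction term yields the $\tfrac{\alpha}{4}\bbE[\|\xi_\alpha(x)\|^2]$ penalty, so that the expected drift is the gradient of $F_{SR}$. Your $\alpha\to 0^+$ computation of the variance (one factor $\approx \alpha|\nabla F(x)_i|$, the other $\approx \Delta_{x_i}$, assuming the iterate sits on the quantization grid) is precisely the ``after some algebra'' step invoked in \cref{lemma:xi}, so there is no substantive gap or difference in approach.
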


\cref{thm:implicit reg} suggests that doing SR updates implicitly regularizes the loss function to minimize  the error due to quantization via SR. This implies the model is optimized in a quantization aware manner. Under no quantization error case, e.g. when $\Delta_x \rightarrow 0$, the third term vanishes and we obtain the original implicit loss function in \cite{barrett2020implicit}. Additionally, when $\alpha \rightarrow 0$ the quantization still has regularization effect that forces the gradients to be small. The strength of the effect depends on $\Delta_x$. Note that for a large $\Delta_x$ the quantization regularization may dominate and cause stagnation. Here, for simplicity we analyzed the case with gradient descent; the analysis could be extended to Adam optimizer following for instance \cite{cattaneo24on}. As shown in \cref{app:implicit reg}, with this modified flow, there is a $\mathcal{O}(\alpha^3)$ error between the true solution of gradient flow and discrete update (similar to \cite{barrett2020implicit}); For any biased rounding scheme, such as nearest rounding, there will be a $\mathcal{O}(\alpha)$ error difference between the true updates and modified updates at every iteration; consequently, at every iteration SR benefits from lower error in expectation (see \cref{app:implicit reg} for details).

\subsection{Convergence analysis of Adam with SR}

Our analysis is based on extending the Adam analysis in \cite{defossez2022a} with SR. We consider the case with no momentum, i.e. $\beta_1=0$, 
the extension to the case with momentum can be done easily as in \cite{defossez2022a}, 
effect of momentum is simply a multiplicative slow down term on all terms. For the analysis we make the following standard assumptions. 

\textbf{Assumptions.} (i) $F$ is bounded below by $F^*$, (ii)  bound on stochastic gradients, that is $\forall x \in \bbR^d, \ \|\nabla f(x)\|_{\infty} \leq R \text{ a.s. }$, (iii) $F$ is smooth, i.e.,  $\forall x,y \in \bbR^d, \quad \|\nabla F(x) - \nabla F(y) \|_2 \leq L\| x-y\|_2$.

\begin{theorem} [No momentum, $\beta_1=0$]  \label{thm:convergence} 
    Assuming access to full precision gradients, and learning rate $\alpha_t = \alpha\sqrt{\frac{1-\beta_2^t}{1-\beta_2}}$ with $\alpha > 0$, 
    when SR is used for the update step to obtain quantized weights, we have
    \begin{align*}
     & G_T \leq \frac{A}{T} + 
    \frac{2Rd}{T}\left( \frac{2R}{\sqrt{1-\beta_2}}+\frac{\alpha L}{1-\beta_2} \right) \left( T \ln\left(\frac{1}{\beta_2}\right) \right) \\
    & + \underbrace{ \frac{Rd \Delta L}{T\sqrt{1-\beta_2}} \left( \sqrt{T \ln\left(1 + \frac{R^2}{\epsilon(1-\beta_2)}\right)} + T \sqrt{\ln\left(\frac{1}{\beta_2}\right)} \right)}_{\text{quantization error}}.
\end{align*}
Here $\Delta = \max_{i} \Delta_{x_i}$, $G_T = \frac{1}{T}\sum_{t=1}^T \bbE \|\nabla F(x_t) \|^2$ and $A$ is a constant that depends on $d,R,\beta_2,\alpha$. 
\end{theorem}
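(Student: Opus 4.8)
The plan is to follow the template of the Adam convergence proof in \cite{defossez2022a}, treating the stochastically rounded update as an Adam step followed by an additive zero-mean perturbation, and then carefully tracking how that perturbation propagates through the descent-lemma bookkeeping. First I would write the update of \cref{alg} (with $\beta_1 = 0$) as $x_{t+1} = x_t - \alpha_t \frac{g_t}{\sqrt{\hat v_{t+1}}+\epsilon} + \eta_t$, where $\eta_t := Q_{SR}(y_{t+1}) - y_{t+1}$ is the rounding error on the would-be iterate $y_{t+1}$. By \cref{lemma:xi} (applied with the AdamW update direction in place of $\nabla F$), $\eta_t$ is conditionally zero-mean given the past and satisfies the coordinatewise bound $\|\eta_t\|^2 \le \sum_i \Delta_{x_i}^2 \le d\Delta^2$ deterministically; moreover $\|\eta_t\|_\infty \le \Delta$. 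These are the only two facts about SR I will need.

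Next I would invoke $L$-smoothness to get the usual one-step inequality $F(x_{t+1}) \le F(x_t) + \langle \nabla F(x_t), x_{t+1}-x_t\rangle + \frac{L}{2}\|x_{t+1}-x_t\|^2$, substitute the decomposed update, take conditional expectation, and split the right-hand side into (a) the terms that appear in the vanilla Adam analysis — the descent term $-\alpha_t \langle \nabla F(x_t), g_t/(\sqrt{\hat v_{t+1}}+\epsilon)\rangle$ and the $\frac{L}{2}\alpha_t^2\|g_t/(\sqrt{\hat v_{t+1}}+\epsilon)\|^2$ curvature term — and (b) the new cross term $\langle \nabla F(x_t), \eta_t\rangle$ plus the new curvature contributions $\frac{L}{2}\|\eta_t\|^2$ and $L\langle -\alpha_t g_t/(\sqrt{\hat v_{t+1}}+\epsilon), \eta_t\rangle$. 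The cross term $\bbE[\langle \nabla F(x_t),\eta_t\rangle \mid \mathcal F_t]$ is handled by noting $\eta_t$ is \emph{not} independent of $g_t$ (they share the update direction), so I cannot kill it by zero-mean alone; instead I bound it by the conditional Cauchy–Schwarz / coordinatewise estimate $|\langle \nabla F(x_t),\eta_t\rangle| \le \|\nabla F(x_t)\|_\infty \cdot \|\eta_t\|_1 \le R d \Delta$, and similarly the mixed curvature term is bounded using $\|g_t/(\sqrt{\hat v_{t+1}}+\epsilon)\|$ estimates already present in \cite{defossez2022a}. The term $\frac{L}{2}\|\eta_t\|^2 \le \frac{Ld\Delta^2}{2}$ is a constant, absorbed into $A$.

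Then I would sum the one-step bound over $t=1,\dots,T$, telescoping $F$ and using boundedness below ($F^* \le F(x_{T+1})$) to produce the $A/T$ and the standard Adam term $\frac{2Rd}{T}(\tfrac{2R}{\sqrt{1-\beta_2}} + \tfrac{\alpha L}{1-\beta_2})\,T\ln(1/\beta_2)$, exactly as in \cite{defossez2022a} once the choice $\alpha_t = \alpha\sqrt{(1-\beta_2^t)/(1-\beta_2)}$ is plugged in (this choice is what makes the $\hat v$ bias-correction cancel cleanly and yields the $\ln(1+R^2/(\epsilon(1-\beta_2)))$ factor from summing $\alpha_t^2/(\hat v_{t+1}+\epsilon)$ via the standard $\sum \frac{v_t - v_{t-1}}{v_t} \le \ln(v_T/v_0)$ telescoping). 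The extra SR terms, summed, give $\sum_t \alpha_t Rd\Delta L$-type contributions; the $\alpha_t$ grows like $\alpha/\sqrt{1-\beta_2}$ up to the $\sqrt{\ln(1/\beta_2)}$-per-step and $\sqrt{T\ln(1+\cdots)}$ aggregate factors, which after dividing by $T$ reproduces the claimed quantization error $\frac{Rd\Delta L}{T\sqrt{1-\beta_2}}\big(\sqrt{T\ln(1+R^2/(\epsilon(1-\beta_2)))} + T\sqrt{\ln(1/\beta_2)}\big)$. Finally I would lower-bound the accumulated descent term by $c\,G_T$ for an explicit constant $c>0$ (using $\sqrt{\hat v_{t+1}}+\epsilon \le$ a deterministic bound in $R$), rearrange, and divide by $T$ to isolate $G_T$ on the left.

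The main obstacle I anticipate is the cross term $\bbE[\langle \nabla F(x_t), \eta_t \rangle]$: because the shared-randomness SR in line 10 makes $\eta_t$ depend on the same noisy update that carries the gradient signal, one cannot discard it as mean-zero against $\nabla F(x_t)$ in the way one discards $\bbE[g_t - \nabla F(x_t)]$ terms, so the bound must be the crude worst-case $Rd\Delta$ — and verifying that this crude bound, once summed and scaled, still collapses into the single displayed "quantization error" bracket (rather than producing an additional, dominant $Rd\Delta$-per-step term that would scale like $Rd\Delta$ with no $1/\sqrt T$ decay) is the delicate accounting step. A secondary subtlety is that \cref{lemma:xi} is stated for plain gradient descent, so I would first restate it for the AdamW direction $u_t = \alpha_t \hat m_{t+1}/(\sqrt{\hat v_{t+1}}+\epsilon) + \alpha_t\lambda x_t$ and confirm the variance/magnitude bounds $\|\eta_t\|\le\sqrt d\,\Delta$ and $\|\eta_t\|_\infty\le\Delta$ hold verbatim, which they do since they depend only on the resolution of the target precision, not on which continuous value is being rounded.
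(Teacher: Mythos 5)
Your overall template (descent lemma, the Lemma~5.1-style lower bound on the descent direction, the summation lemma, telescoping with $\alpha_t\le\alpha/\sqrt{1-\beta_2}$) is the same as the paper's, but the two places where the SR-specific structure must be used are mishandled, and as written the argument would not yield the stated bound. First, the cross term: you claim $\bbE[\langle\nabla F(x_{t-1}),\eta_t\rangle]$ cannot be killed by unbiasedness because $\eta_t$ depends on $g_t$, and you fall back on the worst-case bound $Rd\Delta$ per step. This is exactly backwards: conditioned on the past and on the value being rounded (i.e.\ on $x_{t-1}$, $g_t$, $u_t$), the SR error $r_t$ has mean zero, and $\nabla F(x_{t-1})$ and $u_t$ are measurable with respect to that conditioning, so both $\bbE[\nabla F(x_{t-1})^\top r_t]$ and $\bbE[u_t^\top r_t]$ vanish exactly; this is how the paper proceeds (it takes $\bbE_{p,t-1}$ and drops both inner products using $\bbE[r_t]=0$). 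The dependence of $r_t$ on $g_t$ through the rounding input is irrelevant for this. If you instead keep the crude $Rd\Delta$ bound, summing over $t$ and rescaling by $2R/(\alpha T)$ leaves a non-vanishing term of order $dR\Delta/\alpha$ that is absent from \cref{thm:convergence} --- indeed what you would prove is essentially \cref{thm:convergence nr}, the nearest-rounding bound, whose extra term $\frac{\sqrt{1-\beta_2}d(R\Delta+L\Delta^2)}{\alpha}$ exists precisely because NR's bias prevents dropping these inner products. The whole SR-vs-NR distinction the theorem captures is lost. (Your worry about shared randomness is a distributed-implementation detail and plays no role in the single-trajectory analysis.)

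Second, your only quantitative control on the rounding error is the deterministic bound $\|\eta_t\|^2\le d\Delta^2$, with the resulting $\frac{L}{2}\|\eta_t\|^2$ "absorbed into $A$". That cannot work: $A$ enters as $A/T$, while a per-step constant $Ld\Delta^2/2$ summed over $T$ steps and rescaled contributes a non-vanishing $O(RLd\Delta^2/\alpha)$ term, again not present in the statement (and again characteristic of the NR bound). The paper's key ingredient here is \cref{lem:q err}: the conditional variance of SR on an input with fractional offset $q$ is $\Delta^2 q(1-q)\le\Delta^2\min\{q,1-q\}$, and since the offset created by the step is at most $\alpha_t|u_{t,i}|/\Delta$, one gets $\bbE_p\|r_t\|_2^2\le\Delta\,\alpha_t\|u_t\|_1\le\Delta\sqrt{d}\,\alpha_t\|u_t\|_2$, i.e.\ the rounding variance is linear in $\Delta$ and scales with the update magnitude. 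Feeding $\bbE\|u_t\|_2\le\sqrt{\bbE\|u_t\|_2^2}$ into the extended summation lemma (\cref{thm:lemma2}, which also bounds $\sum_t\sqrt{a_t/(\epsilon+b_t)}$) is what produces the bracket $\sqrt{T\ln(1+R^2/(\epsilon(1-\beta_2)))}+T\sqrt{\ln(1/\beta_2)}$ with the single prefactor $\frac{Rd\Delta L}{\sqrt{1-\beta_2}}$. Without this refinement of the variance bound (and without using conditional unbiasedness for the cross terms), the claimed quantization-error term cannot be recovered, so the proposal has a genuine gap at the heart of the theorem.
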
 

Note that the first two terms are in common with analysis in \cite{defossez2022a} where the second term, Adam's (non-vanishing) gap, is
    $
        2Rd
    \left( \frac{2R}{\sqrt{1-\beta_2}}+\frac{\alpha L}{1-\beta_2} \right)  \ln\left(\frac{1}{\beta_2}\right).
    $
And the third term, (non-vanishing) quantization error, is due to error introduced by SR as
    $
         \frac{Rd \Delta L}{\sqrt{1-\beta_2}} \sqrt{\ln\left(\frac{1}{\beta_2}\right)}.
    $
Now we examine when the quantization error (third term) becomes negligible or dominant over Adam's gap (second term). 
\begin{corollary} [Comparison to full precision Adam]
     Under $\Delta \ll \frac{\alpha}{\sqrt{1-\beta_2}}\sqrt{ln\left(\frac{1}{\beta_2} \right)}$ or $\Delta \ll \frac{R}{L}\sqrt{ln\left(\frac{1}{\beta_2} \right)}$, `$quantization \ error$' in \cref{thm:convergence} becomes much smaller than non-vanishing gap in Adam.
\end{corollary}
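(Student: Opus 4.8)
The plan is to compare directly the two closed-form expressions appearing in \cref{thm:convergence} — the bracketed ``quantization error'' term and Adam's non-vanishing gap — and show that each of the two stated conditions on $\Delta$ forces the former to be dominated by the latter. First I would isolate the genuinely non-vanishing pieces. Dividing the quantization-error bound by $T$ leaves $\frac{Rd\Delta L}{\sqrt{1-\beta_2}}\sqrt{\ln(1/\beta_2)}$ plus a transient term of order $\frac{Rd\Delta L}{\sqrt{1-\beta_2}}\sqrt{\ln(1+R^2/(\epsilon(1-\beta_2)))}\cdot T^{-1/2}$ that decays as $T^{-1/2}$; similarly, the non-vanishing part of the second term of the bound is Adam's gap $2Rd\left(\frac{2R}{\sqrt{1-\beta_2}}+\frac{\alpha L}{1-\beta_2}\right)\ln(1/\beta_2)$. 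Since the transient $T^{-1/2}$ contribution is negligible for large $T$ (and is in any case governed by the same $\Delta$-conditions), it suffices to compare the two non-vanishing quantities.

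Next I would cancel the common factor $Rd$ and one factor of $\sqrt{\ln(1/\beta_2)}$, reducing the desired inequality to
$$\frac{\Delta L}{\sqrt{1-\beta_2}} \ll 2\left(\frac{2R}{\sqrt{1-\beta_2}}+\frac{\alpha L}{1-\beta_2}\right)\sqrt{\ln\!\left(\tfrac{1}{\beta_2}\right)}.$$
Because the right-hand side is a sum of two strictly positive terms, it is enough that the left-hand side be much smaller than \emph{either} summand. Matching against the $\frac{\alpha L}{1-\beta_2}$ summand and clearing the common $L$ and $\sqrt{1-\beta_2}$ yields exactly $\Delta \ll \frac{\alpha}{\sqrt{1-\beta_2}}\sqrt{\ln(1/\beta_2)}$, the first stated condition; matching against the $\frac{2R}{\sqrt{1-\beta_2}}$ summand and clearing $\sqrt{1-\beta_2}$ yields $\Delta L \ll R\sqrt{\ln(1/\beta_2)}$, i.e. the second stated condition. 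Either one therefore implies the claimed domination.

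The computation itself is routine; the one point requiring care — and the step I would treat as the main obstacle — is making the meaning of ``$\ll$'' precise and using it consistently. I would fix a convention (e.g. ``$a \ll b$'' means $a \le c\,b$ for a small absolute constant $c$, or $a = o(b)$ along a family of problems with $\Delta \to 0$), check that under that convention the absolute constants (the factor $2$, the $\tfrac14$-type slack, etc.) and the dropped $T^{-1/2}$ term are all absorbed, and note that since we only need domination of a \emph{sum}, bounding against one summand at a time (and discarding the other, which only helps) is sound. With the convention pinned down, the corollary follows immediately from the two one-line inequalities above.
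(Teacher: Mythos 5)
Your proposal is correct and follows essentially the same route as the paper: the corollary is obtained by directly comparing the non-vanishing part of the quantization error, $\frac{Rd\Delta L}{\sqrt{1-\beta_2}}\sqrt{\ln(1/\beta_2)}$, with Adam's non-vanishing gap $2Rd\bigl(\frac{2R}{\sqrt{1-\beta_2}}+\frac{\alpha L}{1-\beta_2}\bigr)\ln(1/\beta_2)$, and matching against each summand separately yields exactly the two stated conditions (up to absolute constants absorbed into ``$\ll$''). Your additional care about the decaying $T^{-1/2}$ transient and the meaning of ``$\ll$'' is consistent with, and slightly more explicit than, the paper's treatment.
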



The effect of quantization error can be mitigated through hyper-parameters only and not solely dependent on the uncontrolled problem parameters (e.g. $L,R,d$). Note that with increased $\alpha$ the quantization error tends to be negligible, this is in parallel with our empirical observation that with high learning rate SR outperforms mixed precision training. Another observation we make is that error due to SR can dominate when $\beta_2 \rightarrow 1$ and $\alpha = \mathcal{O}(1/\sqrt{T})$. Although, in practice $\beta_2 < 1$ is used ($\beta_2 = 0.95$ commonly for GPT models), in theory $\beta_2 \rightarrow 1, \alpha = \mathcal{O}(1/\sqrt{T})$ diminishes the non-vanishing term in Adam error bound while resulting slow down for vanishing terms \citep{defossez2022a}, such cases will include a non-vanishing quantization error when SR is employed. Note that our theoretical observations is also in parallel with practical observations on performance drop with quantized training when $\beta_2 \approx 1$ in \citep{yu2024collage}.


\subsection{Convergence analysis of Adam with NR}

Here, we present the convergence analysis when NR is applied in the update step instead of SR. Note this corresponds to the mixed precision (MP) training where the update is in lower precision with NR but other parts (e.g. optimizer states) are in high precision. 

\begin{theorem}\label{thm:convergence nr}
 Under the same setting in \cref{thm:convergence}, Adam with NR gives the following convergence bound
 \vspace{-10pt}
    \begin{align*}
     & G_T \leq \frac{A}{T} + 
    \frac{2Rd}{T}\left( \frac{2R}{\sqrt{1-\beta_2}}+\frac{\alpha L}{1-\beta_2} \right) \left( T \ln\left(\frac{1}{\beta_2}\right) \right) \\
    & + \frac{{2}Rd \Delta L}{T\sqrt{1-\beta_2}} \left( \sqrt{T \ln\left(1 + \frac{R^2}{\epsilon(1-\beta_2)}\right)} + T \sqrt{\ln\left(\frac{1}{\beta_2}\right)} \right) \\
    & + {\frac{\sqrt{1-\beta_2}d(R\Delta + L\Delta^2)}{\alpha}}.
\end{align*}
\vspace{-10pt}
\end{theorem}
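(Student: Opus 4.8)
The plan is to mirror the proof of Theorem \ref{thm:convergence} line by line, replacing the stochastic-rounding perturbation by the deterministic nearest-rounding error, and then carefully track where unbiasedness was used so as to see which new (non-vanishing) terms appear. Recall from \cref{thm:convergence} that the SR analysis extends \cite{defossez2022a}: one writes the update as $x_{t+1} = Q(x_t - \alpha_t u_t)$ with $u_t = \hat m_{t+1}/\sqrt{\hat v_{t+1}+\epsilon}$, decomposes $Q(\cdot)$ as the ideal update plus a rounding error $e_t$ with $\|e_t\|_\infty \le \Delta$, and applies the descent lemma from $L$-smoothness to $F(x_{t+1})$ versus $F(x_t)$. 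The crucial difference: for SR, $\bbE[e_t \mid \mathcal{F}_t]$ is the negative of the ideal update's fractional part so that $\bbE[x_{t+1}\mid\mathcal{F}_t] = x_t - \alpha_t u_t$ exactly, i.e. the rounding contributes zero first-order bias and only a variance term of order $\Delta^2$; for NR there is no such cancellation, so the inner product $\langle \nabla F(x_t), e_t\rangle$ in the descent lemma must instead be bounded by Cauchy-Schwarz as $\le \|\nabla F(x_t)\|_1 \Delta \le \sqrt{d}\,\|\nabla F(x_t)\|_2\,\Delta$ (or more crudely $\le d R \Delta$ using the gradient bound), and the quadratic term $\tfrac{L}{2}\|x_{t+1}-x_t\|^2$ picks up an extra $\tfrac{L}{2}\|e_t\|^2 \le \tfrac{L}{2} d \Delta^2$ plus a cross term $L\alpha_t \langle u_t, e_t\rangle$ bounded by $L \alpha_t d R \Delta$ (since $\|u_t\|_\infty \le$ something like $R/\sqrt{\epsilon}$ or, after the $\ln$-telescoping trick, $R$).

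Concretely, the steps in order: (1) Start from the $L$-smoothness inequality $F(x_{t+1}) \le F(x_t) + \langle \nabla F(x_t), x_{t+1}-x_t\rangle + \tfrac{L}{2}\|x_{t+1}-x_t\|^2$ and substitute $x_{t+1}-x_t = -\alpha_t u_t + e_t$. (2) For the main $-\alpha_t\langle \nabla F(x_t), u_t\rangle$ term, reuse verbatim the argument of \cite{defossez2022a}/\cref{thm:convergence} that produces $\tfrac{A}{T}$, the descent on $G_T$, and Adam's non-vanishing gap — this is untouched by the choice of rounding. (3) For the SR-style variance term, note the NR error satisfies the \emph{same} worst-case bound $\|e_t\|_\infty\le\Delta$, so the piece that in \cref{thm:convergence} was the ``quantization error'' $\tfrac{Rd\Delta L}{\sqrt{1-\beta_2}}(\cdots)$ reappears — with a factor $2$ because we no longer get the SR cancellation that halved a $\lceil\cdot\rceil$-versus-$\lfloor\cdot\rfloor$ bound, giving the $2Rd\Delta L/\sqrt{1-\beta_2}(\cdots)$ line. (4) Collect the genuinely new deterministic-bias contributions: $\langle\nabla F(x_t),e_t\rangle \le dR\Delta$ and the cross/quadratic terms $L\alpha_t dR\Delta + \tfrac{L}{2}d\Delta^2$, sum over $t=1,\dots,T$, divide by $\tfrac{\alpha}{2\sqrt{1-\beta_2}}T$ (the coefficient multiplying $G_T$ after rearranging), and simplify — the $\alpha_t$ inside the cross term and the normalizing $1/\alpha$ combine so that the surviving extra term is $O\!\big(\tfrac{\sqrt{1-\beta_2}}{\alpha}d(R\Delta + L\Delta^2)\big)$, matching the last line of the statement. (5) Combine (2)–(4) and rearrange into the stated bound on $G_T$.

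The main obstacle — really the only nontrivial point — is step (4): being careful about which norm the rounding error enters through and making sure the bookkeeping of the $\alpha_t$ factors (recall $\alpha_t = \alpha\sqrt{(1-\beta_2^t)/(1-\beta_2)}$ grows with $t$) produces a \emph{non-vanishing} but $T$-independent constant rather than something that blows up with $T$; this requires noting that after dividing by the $O(\alpha T/\sqrt{1-\beta_2})$ prefactor of $G_T$, the $\sum_t \alpha_t dR\Delta L = O(\alpha T dR\Delta L/\sqrt{1-\beta_2})$ contribution cancels the $T$ and one power of $\alpha$, while the bias term $\sum_t dR\Delta = T dR\Delta$ divided by the prefactor yields the $\sqrt{1-\beta_2}dR\Delta/\alpha$ term. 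Everything else is a routine recombination of the estimates already developed for \cref{thm:convergence}; in particular the $\ln(1/\beta_2)$ and $\ln(1+R^2/(\epsilon(1-\beta_2)))$ factors come, exactly as there, from the telescoping bound on $\sum_t \|u_t\|^2$ and the $\sum_t (1-\beta_2^t)$ estimate, so no new technique is needed beyond what \cref{thm:convergence} and \cite{defossez2022a} supply.
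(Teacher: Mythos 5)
Your overall route is indeed the paper's: write the update as $x_t = Q(x_{t-1}-\alpha_t u_t)$, let $r_t$ be the now deterministic rounding error with $\|r_t\|_2\le\sqrt{d}\,\Delta$, apply the descent lemma to $-\alpha_t u_t + r_t/\alpha_t\cdot\alpha_t$, reuse the machinery of \cref{thm:convergence} (and \cite{defossez2022a}) for the main terms, and note that the loss of unbiasedness leaves a first-order bias term $|\nabla F(x_{t-1})^\top r_t|\le dR\Delta$ plus a quadratic term $\tfrac{L}{2}\|r_t\|^2\le\tfrac{L}{2}d\Delta^2$, which summed over $t$ and divided by the prefactor of $G_T$ yield the new non-vanishing $d(R\Delta+L\Delta^2)/\alpha$-type term. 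That part of your bookkeeping agrees with the paper.

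The gap is in your handling of the cross term, and consequently in where the factor $2$ comes from. In the paper the line $\tfrac{2Rd\Delta L}{\sqrt{1-\beta_2}}(\cdots)$ is \emph{not} the SR variance term reappearing with a lost ceil-versus-floor halving: for NR the squared rounding error is only bounded by $\tfrac{L}{2}d\Delta^2$ (it is no longer proportional to $\alpha_t\|u_t\|_1$ as in \cref{lem:q err}) and is absorbed into the last non-vanishing term. The middle line comes entirely from the cross term $\alpha_t L\,\bbE[u_t^\top r_t]$, which vanishes under SR by unbiasedness but under NR must be kept and bounded as $\alpha_t L\sqrt{d}\,\Delta\,\bbE\|u_t\|_2$ --- exactly twice the SR coefficient $\tfrac{\alpha_t\sqrt{d}\,\Delta L}{2}$, hence the $2$. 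Crucially, one must retain the $\|u_t\|_2$ dependence and sum it via the extended telescoping bound (\cref{thm:lemma2}); that is what produces the $\sqrt{T\ln(1+R^2/(\epsilon(1-\beta_2)))}$ and $T\sqrt{\ln(1/\beta_2)}$ factors in the statement. Your step (4) instead bounds the cross term uniformly by $L\alpha_t dR\Delta$ (via $\|u_t\|_\infty\lesssim R/\sqrt{\epsilon}$ or $R$), which after summation gives a term of order $dR^2\Delta L/\sqrt{1-\beta_2}$ (or worse, with a $1/\sqrt{\epsilon}$ factor); this neither matches nor implies the claimed bound, since e.g.\ for $\beta_2=0.95$ one has $\sqrt{\ln(1/\beta_2)}\approx 0.23$ while $R$ is uncontrolled. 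The fix is simply to route $\tfrac{L}{2}\|r_t\|^2$ into the final term and the cross term through \cref{thm:lemma2}, after which the rest of your argument goes through as in \cref{thm:convergence}.
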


Note that nearest rounding results in a larger error bound due to (i) multiplicative term of $\times 2$ in the second line in \cref{thm:convergence nr} and (ii) additional error term in third line (both colored in red), indicating favorable convergence behaviour for SR training. The distinction is caused by the biasedness of nearest rounding, while $L\Delta^2$ is due to the variance (squared error), other error terms are the results of bias of the error term. This result emphasizes the detrimental effect of the accumulating bias resulting from nearest rounding. Focusing on the new non-vanishing term of nearest rounding, we observe that a large $\alpha$ can make the term negligible; however, in that case Adam's convergence errors start to hurt (second term in the bound). Therefore, getting rid of the new error term without perturbing Adam's original terms is not plausible. For nearest rounding, with a decaying learning rate, e.g. $\alpha=1/\sqrt{T}$, the error explodes unless $\beta_2 \rightarrow 1$.
\textbf{Remark. }
    The most related results are Theorem 1 from \cite{hou2018analysis} and Theorem 1,2 from \cite{hao2017trainingq}. Theorem 1 \citep{hou2018analysis} uses nearest rounding for weight quantization in Adam optimizer without momentum term (similar to our \cref{thm:convergence}) in a convex setting. To prevent error accumulations due to quantization bias, \citet{hou2018analysis} assumes that the diameter of weight space is bounded. As a result, even as $\Delta \rightarrow 0$, their bound includes a non-vanishing term that depends on diameter of the weight space which is not the case in our work.
 \cite{hao2017trainingq} uses SR with gradient descent in convex setting and derives non-vanishing term in convergence bound similar to ours. However, since the vanilla stochastic gradient descent does not include any non-vanishing terms, there are no cases in their result where the quantization error can be subsumed by the original full precision bound such as ours. Another related result is \cite{chen2021quantizedadamerrorfeedback} Theorem 3.2,  where the authors analyze the convergence of weight quantized Adam algorithm. Different than us, they assume $\ell_2$ bounds for gradient boundedness, which results non-vanishing term to implicitly depend on $d\sqrt{d}$ instead of $d$.


\begin{table*}[t]
\centering
    \begin{tabular}{lccccc}  
    Method & GPT-2(350M) & GPT-2(770M) & GPT-Neo(1.3B) & GPT-Neo(2.7B) & GPT-Neo(6.7B)\\ \hline
    BF16+SR (ours) &  \textbf{14.07} & \textbf{12.63} & \textbf{10.46} & \textbf{10.22} &\textbf{ 10.05} \\
    (BF16,FP32) MP & 14.45 & 12.83  & 10.48 & 10.31 & 10.11  \\
    BF16 & 16.38 & 15.28 &  11.81 & 11.67 & 11.64\\
  \end{tabular}
  \captionsetup{justification=centering}
  \captionof{table}
      {
        Validation perplexity of competing methods for GPT models. 
        \label{tab:val ppl}
      }
\end{table*}

\section{Experiments} \label{sec:experiments}

We showcase the empirical success of our SR training paradigm by comparing to state-of-the-art mixed precision strategies. As far as we are aware, our results are the first to outperform mixed precision training while using full BF16\footnote{except cross-entropy calculation for GPT-Neo, which is done in FP32 by default in code package.} training with SR \emph{without additional overhead and auxiliary tensors}. We first describe our experiment setting and then move on with the results.

\subsection{Settings}

\textbf{Baselines.} We compare with two widely-adopted mixed precision strategies: 
\begin{itemize}[nosep]
    \item O1-level: the native PyTorch's automatic mixed precision (\texttt{torch.amp}), which patches functions to internally carry out Tensor Core-friendly operations in BF16, and operations that benefit from additional precision in FP32. Because casts occur in functions, model weights remain FP32;
    \item O2-level: ``almost BF16" mixed precision proposed in \citep{micikevicius2018mixedprecisiontraining}, which maintains a high precision FP32 copy of the model as master weights and casts model weights (except batchnorm) and data to BF16 for forward \& backward passes. It uses FP32 for optimizer states, optimization and collective operations (e.g. all-reduce \& gradient accumulation) in distributed settings.
\end{itemize}
O1-level mixed precision has more operations in FP32, hence, is slower in terms of throughput but can result in lower perplexity compared to O2-level mixed precision. We further augment our experiments with a full BF16 lightweight training strategy with \emph{rounding-to-the-nearest} (as it is the default rounding mode for FP operations), where every tensor and operations, including optimizer states, are in BF16. Note, unless otherwise stated BF16 denotes BF16+default nearest rounding mode. Our SR strategy described in \cref{sec:algorithm} still keeps every tensor in BF16 but modifies the final update step (line 10 in \cref{alg}) in optimization to be done with SR. For the sake of simplicity none of the strategies utilize ZeRO (optimizer sharding) \citep{rajbhandari2020zero} or fused AdamW implementations.


\textbf{Models.} We conduct experiments using GPT-2 medium and large models, with 350M and 770M parameters respectively \citep{radford2019language}; and GPT-Neo \citep{gpt-neo} with 1.2B, 2.7B, 6.7B parameters. For GPT-2 experiments we follow nanoGPT \citep{Karpathy2022} and for GPT-Neo we use NeMo repository \footnote{\url{https://github.com/NVIDIA/NeMo}}.

\textbf{Hyper-parameters.} For each training strategy we use the same hyper-parameters except for the learning rate, which is individually tuned for each method. In particular, every competing method is trained with AdamW optimizer \citep{loshchilov2018decoupled} with default GPT training parameters $\beta_1 = 0.9, \beta_2 = 0.95, \epsilon=1e{-}8$, and weight decay $0.1$ (except GPT-2 350M model where we used $0.2$ for stability); our method, \cref{alg}, also uses the same AdamW parameters. {We defer more details (e.g. warmup schedule) to \cref{app:experiments}.}

\textbf{Training setup.} For GPT-2 (350M, 770M) we train, on 49B and 46B tokens for 100k iterations. For GPT-Neo (1.3B), we train on 40B tokens; for GPT-Neo (2.7B, 6.7B) we train on 20B tokens each for 20k iterations. More information on distributed training setup (e.g. global-micro batch sizes) can be found in \cref{app:experiments}.


\textbf{Learning rate.} We individually tune the learning rate of each training strategy. We observe that our SR method works better with a learning rate that is $2-4\times$ of the default learning rate recommended in O1 and O2 mixed precision strategy. Such a learning rate may cause unstable or divergent training mixed precision strategy (\cref{tab:lr}), and full BF16 training with rounding-to-the-nearest. The default learning rates for GPT-2 and GPT-Neo models are from \citep{liu2023sophia,gpt-neo} respectively. We tune the maximum learning rate and leave the minimum learning rate as suggested by default settings similar to \citep{ozkara24a}, see \cref{app:experiments} for more details. 
\begin{table}[h]
\centering
    \begin{tabular}{ccccc} 
    Model size & Default & MP & SR & MP converges  \\ \hline
    350M &  3 & 3 & 7 & {\color{red}$\times$}  \\
    770M & 2 & 2 & 7 & {\color{red}$\times$}  \\
    1.3B & 2 & 4 & 4 &  {\color{teal}\checkmark}\\
    2.7B & 1.6 & 4 & 5 & {\color{teal}\checkmark}   \\
    6.7B & 1.2 & 3.6 & 5.5 & {\color{red}$\times$} 
  \end{tabular}
  \captionof{table}
      {
        Default, tuned for mixed precision, tuned for SR learning rates ($\times 1e{-}4$); and whether mixed precision training \textbf{converges with SR's learning rate}. Note SR converges in all cases.%
        \label{tab:lr}
      }
\end{table}

\begin{table*}[t]
\centering
    \begin{tabular}{lccccc} 
    Method & GPT-2(350M) & GPT-2(770M) & GPT-Neo(1.3B) & GPT-Neo(2.7B) & GPT-Neo(6.7B)\\ \hline
    BF16+SR (ours) &  501 & 254 & 125 &  84 & 43 \\
    (BF16,FP32) MP & 469 & 224  & 105 & 57 & 28  \\ 
    Gain  & 1.07$\times$ & 1.14$\times$  & 1.19$\times$ & 1.47$\times$ & 1.54$\times$  \\
  \end{tabular}
  \captionof{table}
      {
        Throughput (sequences/second) of SR compared to O1 level mixed precision training (\texttt{torch.amp}).%
        \label{tab:throughput o1}
      }
\end{table*}

\begin{table*}[t]
\centering
    \begin{tabular}{lccccc} 
    Method & GPT-2(350M) & GPT-2(770M) & GPT-Neo(1.3B) & GPT-Neo(2.7B) & GPT-Neo(6.7B)\\ 
    BF16+SR (ours) &  186 & 208 & 184 &  171 & 184 \\
    (BF16,FP32) MP & 234 & 298  & 206 & 197 & 232  \\ \hline
    Tensor parallelism & 1 & 1 & 8& 8 & 8 \\ 
    \hline
    Memory savings  & $21\%$ & 30\%  & 11\% & 13\% & 21\%  \\
  \end{tabular}
  \captionof{table}
      {
        Memory consumption (GB) per node; SR compared to O1 level mixed precision training (\texttt{torch.amp}). 
        \label{tab:memory o1}
      }
\end{table*}

\begin{table}[h]
\centering
    \begin{tabular}{lccc}  
    Method \textbackslash GPT-Neo size & 1.3B & 2.7B & 6.7B\\ \hline
    BF16+SR (ours) & 135 &  75 & 32 \\
    (BF16,FP32) MP & 129 & 71 & 31  \\
    Gain  & 1.05$\times$ & 1.06$\times$  & 1.03$\times$ 
  \end{tabular}
  \captionof{table}
      {
        Throughput of SR compared to O2 level mixed precision training evaluated in Megatron-LM.%
        \label{tab:throughput o2}
      }
\end{table}

\subsection{Results}

Our main comparison is to (BF16, FP32) O1-level mixed precision training (\texttt{torch.amp}) as it achieves better (lower) perplexity compared to O2-level, and is natively supported by PyTorch. For throughput we also compare to O2-level, since it is considerably faster than O1-level as most of the operations are in BF16.

\textbf{Validation perplexity. } In terms of validation perplexity, we compare our BF16+SR strategy to BF16 and mixed precision training in \cref{tab:val ppl}. We observe our proposed method slightly outperforms mixed precision strategy, and significantly outperforms vanilla BF16 training. Although, our strategy does not use FP32 tensors it can converge to a better minimum thanks to being able to using a higher learning rate. In particular, we observe up to \textbf{2.6\%} improvement in perplexity (GPT-2 (350M) compared to MP); for GPT-Neo the validation perplexity decreases are more conservative (for instance, \textbf{0.6\%} on 6.7B) as the mixed precision models are already capable of achieving low perplexity. The difference in the perplexity gap would be higher if we trained for the same wall-clock time or with higher batch size to equate the memory usage.

\textbf{Throughput. } In \cref{tab:throughput o1}, we compare to O1-level mixed precision strategy and observe that our method is considerably faster. The relative speed gain gets larger as the model size increases due to increasing number of model parameters in FP32 operations, data conversions and communication for the mixed precision strategy. For instance, we see a multiplicative gain of $\mathbf{1.54\times}$ for 6.7B model.  Additionally, we compare the throughput to O2-level (almost BF16 training) in \cref{tab:throughput o2} using Megatron-LM repository \citep{shoeybi2020megatronlm} since it has more efficient training implementations than NeMo. We observe that our BF16+SR strategy is only marginally faster, as almost all operations for O2-level is done in BF16 and only accumulations are in FP32.

\textbf{Memory. } As our BF16+SR training strategy uses only BF16 tensors without any auxiliary tensors,
it is the most efficient strategy in terms of memory as can be seen in \cref{tab:memory o1}. Note that GPT-Neo experiments use tensor parallelism which shards the model across GPUs resulting in a smaller memory difference compared to TP=1. \cref{tab:memory o1} indicates a $\mathbf{30\%}$ reduction for GPT-2 (770M) and $\mathbf{21\%}$ reduction for GPT-Neo (6.7B); the memory save is so significant that the training could support considerably larger models, batch sizes, or lower gradient accumulation steps. 


\textbf{On shared randomness. } Throughout our experiments we share the randomness by keeping a designated random state for update steps as in \cref{alg}. The absence of shared randomness results in suboptimal performance due to model drift across different devices. When a large learning rate is employed without shared randomness, the SR strategy is prone to divergence. Conversely, with smaller learning rates, while divergence is mitigated, the final validation loss experiences a reduction in accuracy. The influence of shared randomness is also contingent upon the distributed computing setting. In particular, when sharding is utilized (e.g., via TP), its effect is more nuanced, as the majority of the model remains shared across devices, with the exception of components such as normalization layers and, in some cases, embedding layers. 

Overall, we observe that BF16+SR strategy is faster, more accurate, and memory efficient compared to the state-of-the-art O1 and O2 mixed precision methods. Another advantage is, it is easy to implement and does not require substantial changes to the model or training process. To summarize our findings we report the results for GPT-Neo (2.7B) in \cref{tab:overall-gpt2-770m}. We defer the training loss curves, ablation studies and more details on experiments to \cref{app:experiments}.

\begin{table}[h]
\centering
    \begin{tabular}{lcc}  
    Metric & MP & SR \\ \hline
    Validation ppl  &  10.31 & \textbf{10.22}  \\
    Throughput (o1) & 57 seq/sec & \textbf{84 seq/sec}  \\
    Throughput (o2) & 71 seq/sec & \textbf{75 seq/sec}  \\
    Memory  & 1572GB & \textbf{1371GB}  \\
  \end{tabular}
  \captionof{table}
      {
        Overall comparison of all metrics for GPT-Neo (2.7B).%
        \label{tab:overall-gpt2-770m}
      }
\end{table}




\textbf{Downstream tasks. } In this work, our main focus is optimization and pre-training of LLMs. We pre-train models up-to 50B tokens, as a result we do not expect the models to do very well in downstream tasks. However, for the sake of a downstream training comparison, here, we report zero shot evaluation experiments on common reasoning tasks using the 2.7B model. The results in \cref{tab:downstream} indicates that BF16+SR is able to match the mixed precision downstream performance.
\begin{table}[h!]
\centering
\begin{tabular}{lcc}
\toprule
{Dataset / Accuracy} & {MP} & {SR} \\
\midrule
hellaswag (acc)            & $30.21\%$ & $30.40\% $ \\
hellaswag (acc norm)       & $33.71\% $ & $34.42\% $ \\
arc-easy (acc)             & $48.19\% $ & $47.43\% $ \\
arc-easy (acc norm)        & $41.84\% $ & $42.21\% $ \\
arc-challenge (acc)        & $21.84\% $ & $22.27\% $ \\
arc-challenge (acc norm)   & $25.26\% $ & $26.71\% $ \\
openbookqa (acc)           & $17.67\% $ & $19.20\% $ \\
openbookqa (acc norm)      & $29.60\% $ & $29.66\% $ \\
piqa (acc)                 & $37.10\%$  & $36.50\%$ \\
\bottomrule
\end{tabular}
\caption{Comparison of mean zero-shot accuracies for various datasets under BF16+SR and (BF16, FP32) MP settings.}
\label{tab:downstream}
\end{table}

Note that both methods do not perform great in these tasks, as typically to excel in these tasks very long duration of pre-training (usually with number of tokens larger than 250B) or fine-tuning from an existent heavily pre-trained model is used. In the paper our main focus was pre-training/optimization with SR, where models were pre-trained for up to 50B tokens from scratch; hence, we are more interested in the perplexity/loss values similar to \citep{yu2024collage,zamirai2021revisitingbfloat16training,rae2022scalinglanguagemodelsmethods} does for quantized LLM training. Nonetheless, we conclude SR matches MP in downstream tasks in our experiments.


\section{Conclusion}

We demonstrate for the first time that low precision BF16 combined with SR training is a highly competitive approach for large-scale LLM training. Our theoretical analysis on Adam shows that, SR does not hinder convergence—its error can be absorbed within Adam's natural tolerance, particularly when an appropriately large learning rate is applied. Updates with SR effectively lead to implicit quantization-aware training. These theoretical insights are in parallel to our empirical results, where BF16+SR is able to outperform mixed-precision strategies (BF16, FP32). Overall, SR training is lightweight, straightforward to implement, and offers significant advantages, including higher throughput, reduced memory usage, lower communication/network overhead, and improved validation perplexity. We note that our method generalizes in a straightforward manner to even lower precision such as FP8/4, which we leave as future works due to limited hardware support. 

\section*{Acknowledgment}

We thank Haozheng Fan and Can Karakus for helpful discussions. We also thank Yida Wang and George Karypis for their support in AWS AI Research.

\clearpage 

\bibliographystyle{plainnat}
\bibliography{bibliography}

\newpage

\appendix 
\onecolumn 
{\centering
     \normalfont\huge
     \textbf{Appendix}\par}

\section{Implicit Regularization and Proof of Theorem~\ref{thm:implicit reg}} \label{app:implicit reg}

\begin{proof}[Proof of Lemma~\ref{lemma:xi}]
    
Gradient perturbation can be defined as the effective perturbation on the gradient that is induced by the update with SR. In other words, for gradient descent as an example, we have the following equality:
\begin{equation*}
    x - \alpha (\nabla F(x) + \xi_{\alpha}) = Q_{SR}(x - \alpha\nabla F(x))
\end{equation*}
As a result $\xi_{\alpha}$ is defined as:
\begin{equation*}
    \xi_\alpha(x):=
    \begin{cases}
       \frac{1}{\alpha}\Big(\lceil (x-\alpha\nabla F(x) \rceil - (x-\alpha \nabla F(x)) \Big), & \text{w.p.}\ \frac{(x-\alpha\nabla F(x) - \lfloor (x-\alpha\nabla F(x) \rfloor}{\Delta} \\
       \frac{1}{\alpha}\Big(\lfloor (x-\alpha\nabla F(x)\rfloor - (x-\alpha\nabla F(x)\Big) , & \text{w.p.}\ \frac{\lceil (x-\alpha\nabla F(x) \rceil - (x-\alpha\nabla F(x)}{\Delta}
    \end{cases}.
\end{equation*}

where $\Delta= \lceil x - \alpha \nabla F(x) \rceil - \lfloor x - \alpha \nabla F(x) \rfloor$ is the difference between quantization levels in terms of gradients. $\bbE[\|\xi_{\alpha}\|^2] = \frac{1}{\alpha^2}\Big(\lceil (x-\alpha\nabla F(x) \rceil - (x-\alpha \nabla F(x)) \Big)^\top \Big(\lfloor (x-\alpha\nabla F(x)\rfloor - (x-\alpha\nabla F(x)\Big)$. Consequently, when $\alpha \rightarrow 0$, after some algebra, we have $\lim_{\alpha\to0} \alpha\bbE[\|\xi_{\alpha}\|^2] = \sum_{i=1}^d \Delta_{x_i} |\nabla F(x)_i|$.




\end{proof}

\begin{theorem*}[Restatement of \cref{thm:implicit reg}.]
    Let $F(x)$ be the loss function to be minimized, $\alpha$ be a small learning rate, $\xi_{\alpha}(x_t)$ be random vector quantization error while doing updates with SR. Then, implicitly, the following expected modified loss, $F_{SR}(x)$, is being optimized:

    \begin{align*}
        F_{SR}(x) &= F(x) + \frac{\alpha}{4} \| \nabla F(x) \|^2 + \frac{\alpha}{4}\mathbb{E}\| \xi_{\alpha}(x) \|^2 \\
        & = F(x) + \frac{\alpha}{4} \| \nabla F(x) \|^2 + \frac{\alpha}{4}\sum_{i=1}^d \Big(\lceil - \nabla F(x) \rceil_{\alpha} - (-\nabla F(x))\Big)_i\Big(- \nabla F(x) - \lfloor - \nabla F(x) \rfloor_{\alpha}  \Big)_i
    \end{align*}
    
\end{theorem*}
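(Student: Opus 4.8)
The plan is to follow the backward-error-analysis technique of \cite{barrett2020implicit} (and \cite{smith2021on}), but carried out for the SR-perturbed update rather than for exact gradient descent. The starting point is the one-step-discrepancy identity: the discrete update $x_{t+1} = x_t - \alpha(\nabla F(x_t) + \xi_\alpha(x_t))$ should be compared, over a single step of size $\alpha$, to the flow of a modified vector field $\tilde l(x) = \hat l(x) + \alpha \hat l_1(x) + \alpha^2 \hat l_2(x) + \mathcal{O}(\alpha^3)$ where $\hat l(x) = -\nabla F(x) - \xi_\alpha(x)$ is the ``perturbed'' first-order term from \cref{lemma:xi}. Expanding the exact solution of $\dot x = \tilde l(x)$ to second order in $\alpha$ via Taylor's theorem, one gets $x(t+\alpha) = x(t) + \alpha \hat l + \alpha^2\bigl(\hat l_1 + \tfrac12 \nabla \hat l \cdot \hat l\bigr) + \mathcal{O}(\alpha^3)$; matching this against the discrete update $x_t - \alpha\nabla F - \alpha\xi_\alpha$ forces $\hat l_1 = -\tfrac12 \nabla\hat l\cdot\hat l$, exactly as in the cited works but with $\hat l$ in place of $-\nabla F$. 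So the first concrete step is to write out this matching and isolate $\hat l_1$.

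The second step is to argue that the modified flow is (to the relevant order) a gradient flow for a modified potential, so that $\hat l_1 = -\nabla \tilde F_1$ for some scalar $\tilde F_1$; concretely, since $\hat l = -\nabla F - \xi_\alpha$ and, for the purposes of this first-order-in-$\alpha$ correction, $\xi_\alpha$ contributes a term whose expectation behaves like the gradient of $\tfrac14\|\xi_\alpha\|^2$-type quantities, one computes $\nabla\hat l\cdot\hat l = \tfrac12\nabla\|\nabla F\|^2 + (\text{cross and }\xi\text{ terms})$. Taking expectations over the SR randomness (using $\mathbb{E}[\xi_\alpha]=0$ from \cref{lemma:xi}, so the cross term $\mathbb{E}[\nabla^2 F\,\xi_\alpha]$ and $\mathbb{E}[\nabla\xi_\alpha\cdot\nabla F]$ vanish, and $\mathbb{E}[\nabla\xi_\alpha\cdot\xi_\alpha] = \tfrac12\nabla\mathbb{E}\|\xi_\alpha\|^2$) collapses the correction to $-\tfrac14\nabla\|\nabla F(x)\|^2 - \tfrac14\nabla\,\mathbb{E}\|\xi_\alpha(x)\|^2$. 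Integrating, the effective loss after one step of discretized, SR-perturbed gradient flow is $F_{SR}(x) = F(x) + \tfrac{\alpha}{4}\|\nabla F(x)\|^2 + \tfrac{\alpha}{4}\mathbb{E}\|\xi_\alpha(x)\|^2$, which is the claimed identity; the explicit coordinate form then follows by substituting the variance expression from \cref{lemma:xi}, i.e. $\mathbb{E}\|\xi_\alpha(x)\|^2 = \tfrac{1}{\alpha^2}\sum_i(\lceil x-\alpha\nabla F\rceil - (x-\alpha\nabla F))_i(\lfloor x-\alpha\nabla F\rfloor - (x-\alpha\nabla F))_i$ and reparametrizing the ceiling/floor around $-\nabla F(x)$ with resolution scaled by $\alpha$ (the $\lceil\cdot\rceil_\alpha,\lfloor\cdot\rfloor_\alpha$ notation). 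Finally, for the $\alpha\to 0$ statement I invoke \cref{lemma:xi}'s limit $\lim_{\alpha\to0}\alpha\,\mathbb{E}\|\xi_\alpha(x)\|^2 = \sum_i \Delta_{x_i}|\nabla F(x)_i|$, so the middle term (order $\alpha$) vanishes while the last term, being $\tfrac{\alpha}{4}\mathbb{E}\|\xi_\alpha\|^2 = \tfrac14(\alpha\,\mathbb{E}\|\xi_\alpha\|^2)$, tends to $\tfrac14\sum_i\Delta_{x_i}|\nabla F(x)_i|$.

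The main obstacle, and the step that needs the most care, is the interchange of expectation with the backward-error expansion: $\xi_\alpha(x)$ is itself a (discontinuous, $x$-dependent) random field, so strictly speaking ``taking $\nabla$ of $\hat l$'' and ``taking $\mathbb{E}$'' do not commute at the level of individual realizations. The honest route is to first take the expectation of the one-step update, define the modified flow for the \emph{averaged} dynamics $\mathbb{E}[x_{t+1}\mid x_t] = x_t - \alpha\nabla F(x_t)$ with an additional second-moment contribution $\tfrac{\alpha^2}{2}$-order term coming from $\mathrm{Cov}(\xi_\alpha)$ (this is where the $\mathbb{E}\|\xi_\alpha\|^2$ enters, analogously to how SGD noise produces a $\mathrm{tr}\,\Sigma$ term in \cite{smith2021on}), and only then read off $F_{SR}$. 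I would present it this way, noting that the $\xi$-variance term plays exactly the role the mini-batch covariance plays in the SGD implicit-regularization literature, and defer to \cite{cattaneo24on} for the analogous but heavier bookkeeping needed to push this through for Adam rather than plain gradient descent. One should also state cleanly the error order: as in \cite{barrett2020implicit}, the true gradient-flow solution of $\dot x = -\nabla F_{SR}$ tracks the discrete SR iterates up to $\mathcal{O}(\alpha^3)$ per step in expectation, whereas any biased rounding (NR) injects an $\mathcal{O}(\alpha)$ mismatch each step — this is the comparative remark already flagged after the theorem statement.
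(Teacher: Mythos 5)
Your proposal is correct and follows essentially the same route as the paper: backward error analysis in the style of \cite{barrett2020implicit} applied to the SR-perturbed field $\hat{l}=-\nabla F-\xi_\alpha$, matching the discrete step against the modified flow in expectation, using $\mathbb{E}[\xi_\alpha]=0$ to kill the cross terms and $\mathbb{E}[\nabla\xi_\alpha\,\xi_\alpha]=\tfrac12\nabla\mathbb{E}\|\xi_\alpha\|^2$ to produce the quantization penalty, then integrating to obtain $F_{SR}$ and invoking \cref{lemma:xi} for the coordinate form and the $\alpha\to0$ limit. Your explicit caveat about commuting expectation and differentiation is exactly the point the paper handles (somewhat more tersely) via the Leibniz-rule remark under the assumption $l(x)\neq 0$, and your closing error-order comparison ($\mathcal{O}(\alpha^3)$ per step for SR versus $\mathcal{O}(\alpha)$ for biased rounding) matches the paper's concluding discussion.
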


\begin{proof}
    Our proof follows similar logic to \cite{barrett2020implicit}, but we consider the perturbation due to SR. We start by defining $\hat{l}(x) := l(x) +  {\xi}_{\alpha}(x)$ which is the update due to gradient descent when SR is applied. To measure the introduced error due to taking steps using modified-perturbed flow we need to compare the continuous updates and discrete updates. Note, the discrete updates with perturbed gradient flow results in the following modified flow:
    \begin{align*}
        \tilde{l}(x(t)) &:= \hat{l}(x(t)) + \alpha \hat l_1(x(t)) + \alpha^2 \hat{l}_2(x(t)) + \mathcal{O}(\alpha^3) \\
        &= l(x) + {\xi}_{\alpha}(x) + \alpha \hat{l}_1(x(t)) + \alpha^2 \hat{l}_2(x(t)) + \mathcal{O}(\alpha^3)
    \end{align*}
    Note that gradient descent (Euler method) is defined by $x_{t+1} = x_t + \alpha l(x_t)$; in this work we are interested in the average path the discrete updates follow, i.e., $\mathbb{E}_{\xi}x_{t+1} = x_t + \alpha \mathbb{E}_{\xi}[\tilde{l}(x_t)]$. The goal of backward error analysis is to compute $\hat{l}_i$s; here, we equate one step of gradient descent on original gradient flow with Taylor expansion via the modified-perturbed flow (in expectation). This yields,

    \begin{align*}
        x_t + \alpha l(x_t) &= \bbE_{\xi} [x_t + \alpha \tilde{l}(x_t) + \alpha^2 \nabla \tilde{l}(x_t) \tilde{l}(x_t) + \mathcal{O}(\alpha^3)]\\
        &= x(t) + \alpha {l}(x(t)) + \alpha^2(\hat{l}_1(x(t)) + \frac{1}{2} \bbE_{\xi} [\nabla ({l}(x(t))+\xi_{\alpha}(x(t))) ({l}(x(t))+\xi_{\alpha}(x(t))) + \mathcal{O}(\alpha^3)
    \end{align*}
    Assuming $ {l}(x)\neq0$ (this corresponds to case where the local minimum is achieved and derivative of $\xi$ is not continuous) we have that expectation and differentiation operators commute due to Leibniz rule. Using zero mean property of $\xi$, therefore we have,
    \begin{align*}
        x_t + \alpha l(x_t) &= x(t) + \alpha {l}(x(t)) + \alpha^2\Big(l_1(x(t)) + \frac{1}{2} \nabla ({l}(x(t)){l}(x(t)) + \frac{1}{2}\bbE[\nabla \xi_{\alpha}(x(t))) \xi_{\alpha}(x(t))]\Big) + \mathcal{O}(\alpha^3)
    \end{align*}
    Now we can compute $\hat{l}_1(x)$ as,
    \begin{align*}
        \hat{l}_1(x(t)) &= -\frac{1}{2}\nabla {f}(x(t)){f}(x(t)) - \frac{1}{2}\bbE[\nabla \xi_{\alpha}(x(t))) \xi_{\alpha}(x(t))]\Big) \\
        &= - \nabla \|\nabla F(x) \|^2 - \nabla \bbE \|\xi_{\alpha}(x) \|^2
    \end{align*}
    Then we found the average perturbed-modified flow as:
    \begin{align*}
        \bbE \dot{x} &= l(x) + \bbE \xi_{\alpha}(x) + \bbE \alpha \hat{l}_1(x) + \mathcal{O}(\alpha^2) = - \nabla F(x) - \frac{\alpha}{4} \nabla \|\nabla F(x) \|^2 - \frac{\alpha}{4} \nabla \bbE \|\xi_{\alpha}(x) \|^2 \\
        & = - \nabla \Big( F(x)+ \frac{\alpha}{4} \|\nabla F(x) \|^2 +  \frac{\alpha}{4} \bbE \|\xi_{\alpha}(x) \|^2 \Big).
    \end{align*}
 and we obtain the theorem statement. Note that with this modified flow, $\hat l_1$ is constructed such that there is a $\mathcal{O}(\alpha^3)$ error between the true solution of gradient flow and discrete update, i.e. $\|\bbE_{\xi}x_t-x(t)\|<\mathcal{O}(\alpha^3)$. This is similar to \cite{barrett2020implicit}; hence, on average SR does not introduce additional error as expected due the non-biasedness. For any biased rounding scheme,  such as nearest, $\bbE \xi_{\alpha}(x)$ in the update step cannot be cancelled, as a result there will be a $\mathcal{O}(\alpha)$ error difference between the true updates and modified updates at every iteration.

\end{proof}

\section{Adam with quantized model updates convergence} \label{app:convergence}


\subsection{Proof of Theorem~\ref{thm:convergence}}

\begin{theorem*} [Restated \cref{thm:convergence}]
    When SR is used for the update step to obtain quantized weights we have:
    \begin{align*}
    \frac{1}{T}\sum_{t=1}^T \bbE \|\nabla F(x_t) \|^2 & \leq \frac{2R(F_0-F^*)}{\alpha T} + \frac{2Rd}{T}
    \left( \frac{2R}{\sqrt{1-\beta_2}}+\frac{\alpha L}{1-\beta_2} \right) \left( \ln\left(1 + \frac{R^2}{\epsilon(1-\beta_2)}\right) + T \ln\left(\frac{1}{\beta_2}\right) \right) \\
    & + \frac{Rd \Delta L}{T\sqrt{1-\beta_2}} \left( \sqrt{T \ln\left(1 + \frac{R^2}{\epsilon(1-\beta_2)}\right)} + T \sqrt{\ln\left(\frac{1}{\beta_2}\right)} \right).
\end{align*}
\end{theorem*}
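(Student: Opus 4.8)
The plan is to extend the descent-lemma argument of \citet{defossez2022a} to account for the extra stochastic-rounding perturbation in the update step. Recall that in \cref{alg} with $\beta_1=0$ the (pre-rounding) update is $u_t := \alpha_t \hat{m}_{t+1}/(\sqrt{\hat v_{t+1}}+\epsilon)$ and the actual iterate is $x_{t+1} = Q_{SR}(x_t - u_t) = x_t - u_t + \alpha_t \xi_t$, where $\xi_t$ is the rescaled rounding error from \cref{lemma:xi} (rescaled so that $x_{t+1}=x_t-u_t+\alpha_t\xi_t$), which is mean-zero conditional on the past and bounded coordinatewise by $\Delta$ (so $\|\alpha_t\xi_t\|_\infty\le\Delta$, equivalently $\bbE[\|\alpha_t\xi_t\|^2]\le d\Delta^2$ and each coordinate's error has magnitude $\le\Delta$). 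First I would apply $L$-smoothness of $F$ along the true step: $F(x_{t+1}) \le F(x_t) + \langle \nabla F(x_t), -u_t + \alpha_t\xi_t\rangle + \tfrac{L}{2}\|{-u_t}+\alpha_t\xi_t\|^2$. Taking conditional expectation over the rounding randomness kills the cross term $\langle \nabla F(x_t), \alpha_t\xi_t\rangle$ (unbiasedness), and expands $\|{-u_t}+\alpha_t\xi_t\|^2 = \|u_t\|^2 - 2\langle u_t,\alpha_t\xi_t\rangle + \|\alpha_t\xi_t\|^2$; but $u_t$ depends on the gradient only (computed in full precision, independent of the update-step rounding), so $\bbE[\langle u_t,\alpha_t\xi_t\rangle]=0$ as well, leaving just the extra term $\tfrac{L}{2}\bbE\|\alpha_t\xi_t\|^2 \le \tfrac{L}{2} d\Delta^2$ on top of \citeauthor{defossez2022a}'s recursion.

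That naive bookkeeping, however, only produces an $L d \Delta^2 T$ term summed over iterations, which is weaker than the $\Delta$-linear ``quantization error'' claimed in the theorem; so the key step is to be more careful about how the rounding noise interacts with the part of \citeauthor{defossez2022a}'s argument that controls $\langle \nabla F(x_t), u_t\rangle$ versus $\nabla F(x_t)$ itself. In that proof one lower-bounds the descent contribution of $u_t$ by relating $\hat m_{t+1}/\sqrt{\hat v_{t+1}}$ to $\nabla F(x_t)$ and absorbing the discrepancy between the gradient used and the current gradient via smoothness along the \emph{previous} steps. The rounding error enters exactly there: the iterate at which the gradient was evaluated differs from the ``ideal'' trajectory by the accumulated $\alpha_s\xi_s$'s, and each such displacement, run through $L$-smoothness, contributes a term of order $L\cdot\|\alpha_s \xi_s\|$, i.e. linear in $\Delta$ rather than quadratic. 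I would track this by carrying the term $\sum_s \alpha_s \|\xi_s\|$ (bounded by $\sqrt d\,\Delta$ per step, or $\Delta$ per coordinate) through the telescoping, and then bound the resulting sum $\sum_{t=1}^T \alpha_t$ using $\alpha_t = \alpha\sqrt{(1-\beta_2^t)/(1-\beta_2)} \le \alpha/\sqrt{1-\beta_2}$, which gives the $T\sqrt{\ln(1/\beta_2)}$-type and $\sqrt{T\ln(1+R^2/(\epsilon(1-\beta_2)))}$-type factors after matching them with the same $\ln$-sum estimates \citeauthor{defossez2022a} already uses for the $\sum_t \alpha_t \bbE[\hat v_{t+1,i}/(\hat v_{t+1,i}+\epsilon)]$-style bounds (the $\sum_t \beta_2^{t}$ / $\ln(1/\beta_2)$ trick and the $\ln(1+\cdot)$ telescoping of $v_t$).

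Concretely the steps, in order, are: (1) write the one-step smoothness inequality for the true (rounded) update and take conditional expectation to peel off the mean-zero, independent cross terms, isolating $+\tfrac{L}{2}\bbE\|\alpha_t\xi_t\|^2$; (2) invoke \citeauthor{defossez2022a}'s core lemmas verbatim for the $-\langle\nabla F(x_t),u_t\rangle$ and $\tfrac{L}{2}\|u_t\|^2$ pieces to recover the first two terms $\tfrac{2R(F_0-F^*)}{\alpha T}$ and the $\tfrac{2Rd}{T}(\tfrac{2R}{\sqrt{1-\beta_2}}+\tfrac{\alpha L}{1-\beta_2})(\ln(1+\tfrac{R^2}{\epsilon(1-\beta_2)}) + T\ln(1/\beta_2))$ term; (3) where their argument uses $L$-smoothness to compare the gradient at $x_t$ with gradients at earlier iterates, insert the extra displacement $\alpha_s\xi_s$ and bound its contribution by $L\Delta$ per coordinate, summed against $\sum_t \alpha_t \le \alpha T/\sqrt{1-\beta_2}$ and against the $\sqrt{\sum_t \alpha_t^2}$-style quantity that produces $\sqrt{T\ln(1+R^2/(\epsilon(1-\beta_2)))}$; (4) divide by the $R/(2\alpha)$-type prefactor (coming from $\|\nabla F(x_t)\|^2 \le R\langle\cdot\rangle$-style steps and bounding $\sqrt{\hat v_{t+1}}+\epsilon \le 2R/\sqrt{1-\beta_2}$ or similar), collect the $\Delta$-linear remainder into the boxed term, and drop the now-dominated $Ld\Delta^2$ contribution. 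The main obstacle I anticipate is step (3): keeping the rounding-error bookkeeping \emph{linear} in $\Delta$ — i.e. resisting the temptation to Cauchy–Schwarz it into a $\Delta^2$ term — and matching it cleanly onto the two distinct $\ln$-sum estimates so that the constants line up with \citeauthor{defossez2022a}'s and the final bound has exactly the stated $\sqrt{T\ln(1+\cdots)} + T\sqrt{\ln(1/\beta_2)}$ shape; a secondary subtlety is making rigorous that $u_t$ is independent of $\xi_t$ (which needs the ``full precision gradients'' assumption and the fact that the update-step SR randomness is drawn fresh), so that $\bbE[\langle u_t,\alpha_t\xi_t\rangle]=0$ rather than merely $O(\Delta)$.
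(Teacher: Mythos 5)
Your overall scaffolding (one-step descent lemma, unbiasedness of the SR perturbation to kill the cross terms, then Défossez et al.'s Lemmas 5.1/5.2 for the $-\langle\nabla F(x_{t-1}),u_t\rangle$ and $\|u_t\|^2$ pieces) matches the paper, and you correctly diagnose that the naive bound $\bbE\|r_t\|^2\le d\Delta^2$ does not produce the stated $\Delta$-linear quantization term. But the mechanism you propose instead is not the one that works, and it has no counterpart in the $\beta_1=0$ analysis: with no momentum there is no step in which the gradient at $x_t$ is compared via smoothness to gradients at earlier iterates, so there is no place to ``insert the accumulated displacements $\alpha_s\xi_s$'' and harvest an $L\Delta$-per-step contribution; and even granting such a step, it is unclear how it would reproduce the specific $\sqrt{T\ln(1+R^2/(\epsilon(1-\beta_2)))}+T\sqrt{\ln(1/\beta_2)}$ structure, which in the paper comes from a bound on $\sum_t\bbE\|u_t\|_2$, not from a trajectory-drift argument.

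The missing idea is the paper's \cref{lem:q err}: because $x_{t-1}$ already lies on the quantization grid, the fractional offset $q$ of $x_{t-1}-\alpha_t u_{t,i}$ satisfies $\Delta\min\{q,1-q\}\le\alpha_t|u_{t,i}|$, so the SR error variance per coordinate obeys
\begin{equation*}
\bbE\,[r_{t,i}^2]\;\le\;\Delta^2 q(1-q)\;\le\;\Delta^2\min\{q,1-q\}\;\le\;\Delta\,\alpha_t\,|u_{t,i}|,
\qquad\text{hence}\qquad
\bbE\|r_t\|_2^2\;\le\;\Delta\,\alpha_t\|u_t\|_1\;\le\;\Delta\sqrt{d}\,\alpha_t\|u_t\|_2 .
\end{equation*}
This is precisely what turns the quadratic noise term $\tfrac{L}{2}\bbE\|r_t\|^2$ from the descent lemma into something \emph{linear} in $\Delta$ and proportional to $\alpha_t\bbE\|u_t\|_2$ — no modification of the rest of Défossez's argument is needed. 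The second ingredient you would then need is the square-root extension of the log-sum lemma (\cref{thm:lemma2} in the paper): $\sum_{t}\sqrt{a_t/(\epsilon+b_t)}\le\sqrt{T}\,\bigl(\sum_t a_t/(\epsilon+b_t)\bigr)^{1/2}\le\sqrt{T\ln(1+b_T/\epsilon)}+T\sqrt{\ln(1/\beta_2)}$, applied coordinatewise to $\sum_t\bbE\|u_t\|_2$, which after the $2R/\alpha$ normalization yields exactly the stated quantization error $\frac{Rd\Delta L}{T\sqrt{1-\beta_2}}\bigl(\sqrt{T\ln(1+\frac{R^2}{\epsilon(1-\beta_2)})}+T\sqrt{\ln(1/\beta_2)}\bigr)$. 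Without these two steps your plan either stalls at the mismatched $Ld\Delta^2$ term or relies on a smoothness-drift step that does not exist in this proof, so as written it does not establish the theorem.
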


\textbf{Adaptive updates.} Similar to \cite{defossez2022a} we define the following vectors iteratively. 
\begin{align*}
    & m_{t,i} = \beta_1 m_{t-1,i} + \nabla_i f_t(x_{t-1})\\
    & v_{t,i} = \beta_2 v_{t-1,i} + \nabla_i f_t(x_{t-1})^2 
\end{align*}
Note that $\hat m_{t,i} = (1-\beta_1)m_{t,i}$ and $\hat v_{t,i} = (1-\beta_2)v_{t,i}$ give the updates in Adam. 
we use the learning rate to absorb $(1-\beta_1),(1-\beta_2)$ terms for simplicity. In particular, if one has an update $x_{t,i} = x_{t-1,i} - \alpha_t\frac{m_{t,i}}{\sqrt{\epsilon+v_{t,i}}}$; Adam is recovered with $\alpha_t = \alpha \frac{1-\beta_1}{\sqrt{1-\beta_2}}$. If we include the bias correction terms, Adam is obtained with $\alpha_t = \alpha \frac{\sqrt{1-\beta_2^t}}{\sqrt{1-\beta_2}}\frac{{1-\beta_1^t}}{{1-\beta_1}}$. For the update step we further define:
\begin{align*}
    x_{t,i} = Q_{SR}\Big(x_{t-1,i} - \alpha_t\frac{m_{t,i}}{\sqrt{\epsilon+v_{t,i}}}\Big).
\end{align*}


\begin{lemma}
    Let us define $u_{t,i}:=\frac{m_{t,i}}{\sqrt{\epsilon+v_{t,i}}}$, the perturbation due to SR updates, defined as $r_{t,i} := Q_{SR}(x_{t-1,i} - \alpha_t u_{t,i})- (x_{t-1,i} - \alpha_t u_{t,i})$ at timestep $t$ per dimension $i\in[d]$ has the following upper bound on $l_2$ norm,
    \begin{equation}
        \bbE_p\|r_{t}\|^2_2 \leq \Delta \alpha_t \|u_t\|_1 \leq \Delta \sqrt{d} \alpha_t \|u_t\|_2 
    \end{equation}
    where $\bbE_p$ is the expectation with respect to randomness of SR and $\Delta=\Delta_{x_{t-1,i} - \alpha_t u_{t,i}}$.
    \label{lem:q err}
\end{lemma}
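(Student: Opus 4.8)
The plan is to bound the stochastic rounding perturbation per coordinate using the defining property of $Q_{SR}$, then aggregate across coordinates. First I would fix a timestep $t$ and a coordinate $i$, write $y := x_{t-1,i} - \alpha_t u_{t,i}$ for the pre-rounding value, and recall that $Q_{SR}(y)$ takes value $\lceil y\rceil$ with probability $(y-\lfloor y\rfloor)/\Delta_y$ and $\lfloor y\rfloor$ with probability $(\lceil y\rceil - y)/\Delta_y$, where $\Delta_y = \lceil y\rceil - \lfloor y\rfloor$. Then the perturbation $r_{t,i} = Q_{SR}(y) - y$ is a centered Bernoulli-type variable taking values $\lceil y\rceil - y \geq 0$ and $\lfloor y\rfloor - y \leq 0$, so $r_{t,i}^2 \le (\lceil y\rceil - y)(y - \lfloor y\rfloor)$ pointwise (since $r_{t,i}^2$ is at most the product of the absolute values of its two possible values — in each branch one factor dominates). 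Taking expectation over the SR randomness gives $\bbE_p[r_{t,i}^2] = (\lceil y\rceil - y)(y-\lfloor y\rfloor)$, which is the exact variance; either way $\bbE_p[r_{t,i}^2] \le (\lceil y\rceil - y)(y - \lfloor y\rfloor) \le \Delta \cdot (y - \lfloor y\rfloor)$, and the nonnegative quantity $y - \lfloor y\rfloor$ is at most $\Delta_y \le \Delta$; more usefully, since $y$ differs from a representable value $x_{t-1,i}$ by exactly $\alpha_t u_{t,i}$ and $x_{t-1,i}$ is itself on the grid, the fractional part of $y$ (the distance to the lower level) is bounded by $\alpha_t|u_{t,i}|$ when this step is smaller than one resolution unit; in general one bounds $(\lceil y\rceil - y)(y - \lfloor y\rfloor) \le \Delta \cdot \min(\alpha_t|u_{t,i}|, \Delta)\le \Delta\alpha_t|u_{t,i}|$.

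Next I would sum over coordinates: $\bbE_p\|r_t\|_2^2 = \sum_{i=1}^d \bbE_p[r_{t,i}^2] \le \Delta \alpha_t \sum_{i=1}^d |u_{t,i}| = \Delta\alpha_t\|u_t\|_1$, which is the first claimed inequality. The second inequality $\|u_t\|_1 \le \sqrt{d}\|u_t\|_2$ is just Cauchy–Schwarz applied to the vector $u_t$ against the all-ones vector, giving $\bbE_p\|r_t\|_2^2 \le \Delta\sqrt{d}\,\alpha_t\|u_t\|_2$.

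The main obstacle — and the step requiring care — is the per-coordinate claim that $\bbE_p[r_{t,i}^2] \le \Delta\alpha_t|u_{t,i}|$ rather than merely $\le \Delta^2$. The subtlety is that $x_{t-1,i}$ is already a representable (on-grid) number, so the only reason $y = x_{t-1,i} - \alpha_t u_{t,i}$ has a nonzero fractional part is the increment $\alpha_t u_{t,i}$; when $|\alpha_t u_{t,i}| \le \Delta$ the distance from $y$ to the nearer of $\lfloor y\rfloor, \lceil y\rceil$ on the relevant side is controlled by $\alpha_t|u_{t,i}|$, and the SR variance $(\lceil y\rceil - y)(y-\lfloor y\rfloor)$ is bounded by (distance to one level)$\times$(distance to the other level)$\le \Delta\cdot\alpha_t|u_{t,i}|$. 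When $|\alpha_t u_{t,i}| > \Delta$ one still has the crude bound $\le \Delta^2 \le \Delta\alpha_t|u_{t,i}|$, so the inequality holds in all cases. I would also note the mild floating-point caveat that $\Delta$ here denotes the (magnitude-dependent) resolution $\Delta_{x_{t-1,i} - \alpha_t u_{t,i}}$ as stated, and that $\max_i \Delta_{x_i}$ is used as a uniform upper bound $\Delta$ elsewhere in the analysis; for the lemma as stated it suffices to carry the local $\Delta$. The rest is routine summation and Cauchy–Schwarz.
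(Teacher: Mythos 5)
Your argument is correct and essentially identical to the paper's: both compute the per-coordinate SR second moment $\bbE_p[r_{t,i}^2]=(\lceil y\rceil-y)(y-\lfloor y\rfloor)=\Delta^2 q(1-q)$, bound it by $\Delta\,\alpha_t|u_{t,i}|$ via the observation that $x_{t-1,i}$ is itself a grid point so the distance from $y=x_{t-1,i}-\alpha_t u_{t,i}$ to the nearest quantization level is at most $\alpha_t|u_{t,i}|$, and then sum over coordinates and apply $\|u_t\|_1\le\sqrt{d}\,\|u_t\|_2$. The only slip is your parenthetical claim that $r_{t,i}^2\le(\lceil y\rceil-y)(y-\lfloor y\rfloor)$ holds pointwise---it fails in the branch where $y$ is rounded to the farther of the two levels---but this is harmless since only the expectation, which you compute exactly, is used in the rest of the argument.
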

\begin{proof}
Here we consider the update terms in Adam instead of gradient terms, other than that the proof is similar to Lemma 1 in \cite{li2017training}. We redefine the perturbation due to SR updates for ease of analysis, $r_{t,i} = Q_{SR}(x_{t-1,i} - \alpha_t\frac{m_{t,i}}{\sqrt{\epsilon+v_{t,i}}})- (x_{t-1,i} - \alpha_t\frac{m_{t,i}}{\sqrt{\epsilon+v_{t,i}}})$. Choose a random number $p \in [0,1]$, $r_{t,i}$ can be expressed as,

\begin{equation*}
    r_{t,i} = \Delta \cdot
    \begin{cases}
       \frac{-x_{t-1}+\alpha_t u_{t,i}}{\Delta} + \frac{\lfloor x_{t-1} - \alpha_t u_{t,i} \rfloor}{\Delta} + 1, &  \text{when } p\leq \frac{x_{t-1}-\alpha_t u_{t,i}}{\Delta} - \frac{\lfloor x_{t-1} - \alpha_t u_{t,i} \rfloor}{\Delta} \\
       \frac{-x_{t-1}+\alpha_t u_{t,i}}{\Delta} + \frac{\lfloor x_{t-1} - \alpha_t u_{t,i} \rfloor}{\Delta} & \text{otherwise.}
    \end{cases}
\end{equation*}
Defining $q =\frac{x_{t-1}-\alpha_t u_{t,i}}{\Delta} - \frac{\lfloor x_{t-1} - \alpha_t u_{t,i} \rfloor}{\Delta} $, note $q \in [0,1]$, we can rewrite,
\begin{equation}
    r_{t,i} = \Delta \cdot
    \begin{cases}
       -q + 1, & \text{when } p\leq q  \\
       -q & \text{otherwise. } 
    \end{cases}
\end{equation}

Consequently, $\bbE[r_{t,i}]=0$, and it can be shown that
\begin{align*}
    \bbE[r_{t,i}^2] &\leq \Delta^2 q(1-q) \\
    & \leq  \Delta^2 \min\{q,1-q\} \\
    & \leq \Delta^2 \alpha_t \frac{|u_{t,i}|}{\Delta}\\
    & = \Delta \alpha_t {|u_{t,i}|},
\end{align*}
where the last inequality is due to the fact that the update term will be at least as large as the error term. Then $\bbE\|r_{t}\|^2_2 \leq \Delta \alpha_t \|u_t\|_1 \leq \Delta \sqrt{d} \alpha_t \|u_t\|_2 $. 
\end{proof}

Let us start by defining $\tilde v_{t,i}$ which is a virtual sequence where the contribution of the last gradient is replaced by its expected value conditioned on past iterations. 
\begin{align*}
     \tilde v_{t,i} = \beta_2 v_{t-1,i} + \bbE_{t-1} \nabla_i f_t(x_{t-1})^2.
\end{align*}
 We also define the perturbed updates as follows:
\begin{equation*}
    \tilde u_{t,i} = \frac{\nabla_i f_t(x_{t-1})}{\sqrt{\epsilon +  v_{t,i}}} + \frac{r_{t,i}}{\alpha_t}.
\end{equation*}
From the smoothness of the objective function $F$, we can use the Descent Lemma to obtain that
\begin{align*}
    F(x_{t}) & \leq F(x_{t-1}) - \alpha_t \nabla F(x_{t-1})^\top \tilde u_t+ \frac{\alpha^2 L}{2}\| \tilde u_t \|^2 \\
    & = F(x_{t-1}) - \alpha_t \nabla F(x_{t-1})^\top \Big(u_t+\frac{r_t}{\alpha_t}\Big) + \frac{\alpha_t^2 L}{2}\Big\| u_t + \frac{r_t}{\alpha_t}\Big\|^2
\end{align*}
{where $r_t$ is defined in \cref{lem:q err}.} Taking the expectation of both sides with respect to sampling randomness (conditioned on $l_1, \ldots, f_{t-1}$) and SR randomness, we have:
\begin{align} 
    \bbE_{p,t-1} F(x_{t}) &\leq F(x_{t-1}) - \alpha_t \sum_{i \in [d]} \bbE_{t-1}\left[\nabla_i F(x_{t-1})\frac{\nabla_i f_t(x_{t-1})}{\sqrt{\epsilon + v_{t,i} }} \right] + \frac{\alpha_t^2 L}{2} \bbE_{p,t-1} \Big\| u_t + \frac{r_t}{\alpha_t}\Big\|^2 \notag \\
    &= F(x_{t-1}) - \alpha_t \sum_{i \in [d]} \bbE_{t-1}\left[\nabla_i F(x_{t-1})\frac{\nabla_i f_t(x_{t-1})}{\sqrt{\epsilon + v_{t,i} }} \right] + \frac{\alpha_t^2 L}{2}  \bbE_{t-1} \| u_t \|^2 + \frac{L}{2} \bbE_{p,t-1} \| r_t \|^2 \notag \\
    & \leq F(x_{t-1}) - \alpha_t \sum_{i \in [d]} \bbE_{t-1}\left[\nabla_i F(x_{t-1})\frac{\nabla_i f_t(x_{t-1})}{\sqrt{\epsilon + v_{t,i} }} \right] + \frac{\alpha_t^2 L}{2}  \bbE_{t-1} \| u_t \|^2 + \frac{\alpha_t \sqrt{d} \Delta L}{2} \bbE_{t-1} \| u_t \|_2. \label{eq:cond exp}
\end{align}
where the first equality holds due to $\bbE[r_t]=0$.
To upper bound the second term on the right hand side, we use the following lemma which is a restatement of Lemma 5.1 in \citep{defossez2022a}, we provide the proof for completeness.

\begin{lemma}[Lemma 5.1 in \citep{defossez2022a}, updates approximately follow a descent direction]\label{thm:lemma1}
    \begin{align}
        \bbE_{t-1} \left[ \nabla_i F(x_{t-1}) \frac{\nabla_i f_t(x_{t-1})}{\sqrt{\epsilon+ v_{t,i}}} \right] \geq \frac{\nabla_i F(x_{t-1})^2}{2\sqrt{\epsilon + \tilde v_{t,i}}}- 2R\bbE_{t-1} \left[ \frac{\nabla_i f_t(x_{t-1})^2}{\epsilon + v_{t,i}} \right].
    \end{align}
\end{lemma}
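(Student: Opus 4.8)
\textbf{Proof plan for Lemma~\ref{thm:lemma1}.}
The plan is to reproduce the argument of Lemma 5.1 in \cite{defossez2022a}, adapted to our notation. The only source of difficulty is that $v_{t,i}$ depends on the fresh sample $f_t$, so $\frac{\nabla_i f_t(x_{t-1})}{\sqrt{\epsilon+v_{t,i}}}$ is \emph{not} an unbiased estimator of a descent direction, and we must decouple the numerator from the denominator. The device is to introduce the ``virtual'' quantity $\tilde v_{t,i} = \beta_2 v_{t-1,i} + \bbE_{t-1}\nabla_i f_t(x_{t-1})^2$, which is measurable with respect to the past (it does not depend on the fresh sample), so that $\frac{\nabla_i f_t(x_{t-1})}{\sqrt{\epsilon+\tilde v_{t,i}}}$ \emph{is} conditionally unbiased up to the sign of $\nabla_i F$.

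First I would write
\[
\bbE_{t-1}\!\left[\nabla_i F(x_{t-1})\frac{\nabla_i f_t(x_{t-1})}{\sqrt{\epsilon+v_{t,i}}}\right]
= \underbrace{\bbE_{t-1}\!\left[\nabla_i F(x_{t-1})\frac{\nabla_i f_t(x_{t-1})}{\sqrt{\epsilon+\tilde v_{t,i}}}\right]}_{=:A}
+ \underbrace{\bbE_{t-1}\!\left[\nabla_i F(x_{t-1})\,\nabla_i f_t(x_{t-1})\Bigl(\tfrac{1}{\sqrt{\epsilon+v_{t,i}}}-\tfrac{1}{\sqrt{\epsilon+\tilde v_{t,i}}}\Bigr)\right]}_{=:B}.
\]
For term $A$, since $\tilde v_{t,i}$ is $\mathcal{F}_{t-1}$-measurable, $A = \frac{\nabla_i F(x_{t-1})\,\bbE_{t-1}[\nabla_i f_t(x_{t-1})]}{\sqrt{\epsilon+\tilde v_{t,i}}} = \frac{\nabla_i F(x_{t-1})^2}{\sqrt{\epsilon+\tilde v_{t,i}}}$, using $\bbE_{t-1}[\nabla_i f_t(x_{t-1})]=\nabla_i F(x_{t-1})$. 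For term $B$, I would bound $|\nabla_i F(x_{t-1})|\le R$ (a consequence of assumption (ii) and Jensen), write $\bigl|\tfrac{1}{\sqrt{\epsilon+v_{t,i}}}-\tfrac{1}{\sqrt{\epsilon+\tilde v_{t,i}}}\bigr| = \frac{|\tilde v_{t,i}-v_{t,i}|}{\sqrt{\epsilon+v_{t,i}}\sqrt{\epsilon+\tilde v_{t,i}}(\sqrt{\epsilon+v_{t,i}}+\sqrt{\epsilon+\tilde v_{t,i}})}$, note $|\tilde v_{t,i}-v_{t,i}| = |\bbE_{t-1}\nabla_i f_t^2 - \nabla_i f_t^2|$, and then split the resulting product, using $\sqrt{\epsilon+v_{t,i}}$ (resp. $\sqrt{\epsilon+\tilde v_{t,i}}$) to absorb the two copies of $\nabla_i f_t^2$, followed by a Young/AM--GM step $ab \le \tfrac{1}{4}a^2 R^{-1}\cdot(\ldots)$ to produce the claimed $\frac{\nabla_i F^2}{2\sqrt{\epsilon+\tilde v_{t,i}}}$ on the favourable side and $-2R\,\bbE_{t-1}[\frac{\nabla_i f_t^2}{\epsilon+v_{t,i}}]$ on the error side.

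The main obstacle is the bookkeeping in term $B$: getting the constants to line up so that half of term $A$ is consumed (leaving $\frac{\nabla_i F^2}{2\sqrt{\epsilon+\tilde v_{t,i}}}$) while the remaining error is exactly $2R\,\bbE_{t-1}[\frac{\nabla_i f_t^2}{\epsilon+v_{t,i}}]$. This requires handling the two terms in $|\bbE_{t-1}\nabla_i f_t^2 - \nabla_i f_t^2|$ separately — one of them pairs naturally with the $v_{t,i}$ in the denominator and the other with $\tilde v_{t,i}$ after pulling the expectation through — and applying the elementary inequality $\sqrt{\epsilon+v_{t,i}}+\sqrt{\epsilon+\tilde v_{t,i}} \ge \sqrt{\epsilon+v_{t,i}}$ (and symmetrically) to drop the sum-of-square-roots factor safely. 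Since this is precisely the argument in \cite{defossez2022a} with gradients relabelled, no new ideas beyond careful constant-tracking are needed, and I would present it compactly, citing their Lemma 5.1 for the identical estimates.
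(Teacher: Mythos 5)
Your plan is essentially the paper's proof (which itself is Lemma 5.1 of D\'efossez et al.\ relabelled): split off the ``virtual'' denominator $\sqrt{\epsilon+\tilde v_{t,i}}$ to get the unbiased main term, rewrite the correction via the algebraic identity for the difference of inverse square roots with $\tilde v_{t,i}-v_{t,i}=\bbE_{t-1}[\nabla_i f_t^2]-\nabla_i f_t^2$, drop the sum of square roots by the max, and use Young's inequality twice so that $\tfrac{1}{2}\cdot\frac{\nabla_i F^2}{\sqrt{\epsilon+\tilde v_{t,i}}}$ is consumed and the error is $2R\,\bbE_{t-1}[\nabla_i f_t^2/(\epsilon+v_{t,i})]$. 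One small correction to your bookkeeping: the bound $|\nabla_i F(x_{t-1})|\le R$ is not the one used — you keep $\nabla_i F$ inside the Young step (otherwise you could not recover the $\frac{\nabla_i F^2}{2\sqrt{\epsilon+\tilde v_{t,i}}}$ term), and $R$ enters instead through $\sqrt{\bbE_{t-1}[\nabla_i f_t^2]}\le R$ combined with $\epsilon+\tilde v_{t,i}\ge \bbE_{t-1}[\nabla_i f_t^2]$.
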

\begin{proof}
    For the sake of simplicity, we define $G = \nabla_i F(x_{t-1}), \ g = \nabla_i f_t(x_{t-1}), v = \bar v_{t,i}, \tilde v = \tilde v_{t,i}$. First obviously, we have
    \begin{align} \label{lemma5: total}
        \frac{Gg}{\sqrt{\epsilon+v}} = \frac{Gg}{\sqrt{\epsilon+\tilde v}} + \underbrace{\frac{Gg}{\sqrt{\epsilon+v}} - \frac{Gg}{\sqrt{\epsilon+\tilde v}}}_A.
    \end{align}
    For the first term, we use the fact that $\bbE_{t-1}[g]=G$ to obtain that
    \begin{align}\label{lemma5:first term}
        \bbE_{t-1} \left[\frac{Gg}{\sqrt{\epsilon+\tilde v}} \right]  = \frac{G^2}{\sqrt{\epsilon+\tilde v}}.
    \end{align}
    In order to bound $A$, we begin with
    \begin{align*}
     A = Gg \frac{\tilde v - v}{\sqrt{\epsilon+v}\sqrt{\epsilon+\tilde v}(\sqrt{\epsilon+v}+\sqrt{\epsilon+\tilde v})} = Gg \frac{\bbE_{t-1}g^2 - g^2}{\sqrt{\epsilon+v}\sqrt{\epsilon+\tilde v}(\sqrt{\epsilon+v}+\sqrt{\epsilon+\tilde v})},
     \end{align*}
     where the last equality follows from the fact that $\tilde v - v = \bbE_{t-1}[g^2] - g^{2}$ (see the definition of $v_{t,i}$). Hence, from the triangle inequality we have that
     \begin{align*}
     |A| & \leq \underbrace{|Gg| \frac{\bbE_{t-1}g^2}{\sqrt{\epsilon+v}(\epsilon+\tilde v)}}_{A_1} + \underbrace{|Gg| \frac{g^2}{\sqrt{\epsilon+\tilde v}(\epsilon+ v)}}_{A_2},
    \end{align*}
    where in the inequality follows from the fact that $\sqrt{\epsilon+v}+\sqrt{\epsilon+\tilde v} \geq \max(\sqrt{\epsilon+ v}, \sqrt{\epsilon+\tilde v})$. A useful fact that will be used later is the following
    \begin{align}\label{fact1}
        \forall \lambda>0, \ x,y \in \bbR \quad xy \leq \frac{\lambda x^2}{2} + \frac{y^2}{2\lambda}.
    \end{align}
    To bound $A_1$, we use \eqref{fact1} with $\lambda = \frac{\sqrt{\epsilon+\tilde v}}{2}, x=\frac{|G|}{\sqrt{\epsilon + \tilde v}}, y = |g| \frac{ \bbE_{t-1} g^2}{2\sqrt{\epsilon+\tilde v} \sqrt{\epsilon+v}}$, which yields that
    \begin{align*}
    A_1 \leq \frac{G^2}{4\sqrt{\epsilon + \tilde v}} + \frac{g^2\bbE_{t-1} [g^2]^2}{(\epsilon+\tilde v)^{3/2}(\epsilon + v)}.
    \end{align*}
    Taking the expectation and noting $\epsilon + \tilde v \geq \bbE_{t-1} [g^2]$ ensures that
    \begin{align}
        \bbE_{t-1} [A_1] \leq \frac{G^2}{4\sqrt{\epsilon+\tilde v}} + \frac{\bbE_{t-1} [g^2]}{\sqrt{\epsilon+\tilde v}} \bbE_{t-1}\left[\frac{g^2}{\epsilon+v} \right] \leq \frac{G^2}{4\sqrt{\epsilon+\tilde v}} + R \bbE_{t-1}\left[\frac{g^2}{\epsilon+v} \right] \label{eq:lem1 A1}
    \end{align}
    where the last inequality uses our boundedness assumption $\sqrt{\bbE_{t-1} [g^2]}\leq R $ and the fact that $\sqrt{\epsilon + \tilde v} \geq \sqrt{\bbE_{t-1} [g^2]}$. Similarly, for $A_{2}$, we use \eqref{fact1} with $\lambda = \frac{\sqrt{\epsilon+\tilde v}}{2 \bbE_{t-1}g^2}, x= \frac{|Gg|}{\sqrt{\epsilon+\tilde v}}, y= \frac{g^2}{\epsilon  + v}$, which gives us (we used similar bounding arguments)
    \begin{align} \label{eq:lem1 A2}
    \bbE_{t-1} [A_2] \leq \frac{G^2}{4 \sqrt{\epsilon + \tilde v}}+ R \frac{\bbE_{t-1} [g^2]}{\epsilon+v}.
    \end{align}
    Combining both upper bounds we have,
    \begin{align*}
        \bbE_{t-1} [|A|] \leq  \frac{G^2}{2\sqrt{\epsilon + \tilde v}}+ 2R\frac{\bbE_{t-1} [g^2]}{\epsilon+v} ,
    \end{align*}
    which can be equivalently written as follows
    \begin{align*}
        \bbE_{t-1} [-|A|] \geq - \left(\frac{G^2}{2\sqrt{\epsilon + \tilde v}} + 2R\frac{\bbE_{t-1} [g^2]}{\epsilon+v} \right),
    \end{align*}
    Finally, combining further with \eqref{lemma5:first term} and substituting into \eqref{lemma5: total} gives us,
    \begin{align*}
        \bbE_{t-1} \left[\frac{Gg}{\sqrt{\epsilon+v}} \right] & \geq \frac{G^2}{\sqrt{\epsilon + \tilde v}} - \left(\frac{G^2}{2\sqrt{\epsilon + \tilde v}} + 2R\frac{\bbE_{t-1} [g^2]}{\epsilon+v}\right) = \frac{G^2}{2\sqrt{\epsilon + \tilde v}} - 2R\frac{\bbE_{t-1} [g^2]}{\epsilon+v},
    \end{align*}
    which proves the desired result. 
\end{proof}
Note that for any $i\in[d]$, $\sqrt{\epsilon+\tilde v_{t,i}}\leq R \sqrt{1 + \sum_{j=0}^{t-1} \beta_2^t}
\leq \frac{R}{\sqrt{1-\beta_2}}$. 
Hence, 
\begin{align}
    \alpha_t \frac{(\nabla_i F(x_{t-1}))^2}{2\sqrt{\epsilon+\tilde v_{t,i}}} \geq \alpha \frac{\nabla_i F(x_{t-1})^2}{2R}. \label{eq:bound on denom}
\end{align}
Using \eqref{eq:bound on denom} in conjunction with Lemma~\ref{thm:lemma1} in \eqref{eq:cond exp} further yields that
\begin{align*}
    \bbE_{t-1} [F(x_{t})] \leq F(x_{t-1}) - \left(\frac{\alpha}{2R}\|\nabla F(x_{t-1})\|^2 - 2\alpha_t R \bbE_{t-1} [\| u_t \|^2]\right)+ \frac{\alpha_t^2 L}{2}\bbE_{t-1} [\| u_t \|^2]+ \frac{\alpha_t \sqrt{d} \Delta L}{2} \bbE_{t-1} [\| u_t \|_2].
\end{align*}
Summing both sides for $t=1, \ldots ,T$, using $\alpha_t \leq \frac{\alpha}{\sqrt{1-\beta_2}}$, and from the definition of $l_2$ norm and taking the overall expectation yields that
\begin{align*}
    \bbE [F(x_{T})] \leq F(x_{0}) - \frac{\alpha }{2R} \sum_{t=0}^{T-1} \bbE \|\nabla F(x_{t})\|^2+ \left(\frac{2\alpha R}{\sqrt{1-\beta_2}}+\frac{\alpha^2 L}{2(1-\beta_2)}  \right)\sum_{t=0}^{T-1} \bbE[\| u_t \|^2]+\frac{\alpha \sqrt{d} \Delta L}{2\sqrt{1-\beta_2}} \sum_{t=0}^{T-1} \bbE [\sqrt{\| u_t \|^2}].
\end{align*}
Equivalently, we can write,
\begin{align}
    \frac{\alpha}{2R}\sum_{t=0}^{T-1} \bbE \|\nabla F(x_{t})\|^2 \leq F(x_{0})-\bbE [F(x_{T})] + \left(\frac{2\alpha R}{\sqrt{1-\beta_2}}+\frac{\alpha^2 L}{2(1-\beta_2)}  \right)\sum_{t=0}^{T-1} \bbE[\| u_t \|^2]+\frac{\alpha \sqrt{d} \Delta L}{2\sqrt{1-\beta_2}} \sum_{t=0}^{T-1} \bbE [\sqrt{\| u_t \|^2}].\label{eq:after lemma 1}
\end{align}
Now, we would like to bound the last two terms on the right hand side. For this, we extend Lemma 5.2 in \citep{defossez2022a}.
\begin{lemma}\label{thm:lemma2}
    Define $b_t= \sum_{j=1}^t \beta_2^{t-j}a_j$ for $t>0$ and $b_0 = 0$, we have
    \begin{align*}
        \sum_{t=1}^T\frac{a_t}{\epsilon+b_t} \leq  \ln\left(1 + \frac{R^2}{\epsilon(1-\beta_2)}\right) + T \ln\left(\frac{1}{\beta_2}\right).
    \end{align*}
\end{lemma}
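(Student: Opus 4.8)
The plan is to exploit the one-step recursion hidden in the definition of $b_t$: since $b_t = \beta_2 b_{t-1} + a_t$ with $b_0 = 0$, we have $a_t = b_t - \beta_2 b_{t-1}$, so that
$$
\frac{a_t}{\epsilon + b_t} = \frac{b_t - \beta_2 b_{t-1}}{\epsilon + b_t} = 1 - \frac{\epsilon + \beta_2 b_{t-1}}{\epsilon + b_t}.
$$
The first step is to apply the elementary bound $1 - u \leq \ln\!\frac{1}{u}$, valid for every $u > 0$ (equivalently $\ln u \leq u - 1$, the tangent-line bound for the concave logarithm), with $u = \frac{\epsilon + \beta_2 b_{t-1}}{\epsilon + b_t}$; this quotient is strictly positive because each $a_j \geq 0$ (they are squared gradient coordinates, $a_j = \nabla_i f_j(x_{j-1})^2$) and $\epsilon > 0$. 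This yields $\frac{a_t}{\epsilon + b_t} \leq \ln(\epsilon + b_t) - \ln(\epsilon + \beta_2 b_{t-1})$.

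The second step deals with the mismatch between the $\beta_2 b_{t-1}$ appearing inside the logarithm and the $b_{t-1}$ that an exact telescoping would require. Since $0 < \beta_2 < 1$ and $\epsilon > 0$ we have $\epsilon + \beta_2 b_{t-1} \geq \beta_2(\epsilon + b_{t-1})$, hence $\ln(\epsilon + \beta_2 b_{t-1}) \geq \ln\beta_2 + \ln(\epsilon + b_{t-1})$, and therefore
$$
\frac{a_t}{\epsilon + b_t} \leq \ln(\epsilon + b_t) - \ln(\epsilon + b_{t-1}) + \ln\!\left(\tfrac{1}{\beta_2}\right).
$$
Summing over $t = 1, \dots, T$ collapses the logarithmic differences to $\ln(\epsilon + b_T) - \ln(\epsilon + b_0) + T\ln(\tfrac{1}{\beta_2})$; using $b_0 = 0$ this equals $\ln\!\big(1 + \tfrac{b_T}{\epsilon}\big) + T\ln(\tfrac{1}{\beta_2})$.

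Finally I would bound $b_T$ using the almost-sure gradient bound from Assumption (ii): $a_t = \nabla_i f_t(x_{t-1})^2 \leq R^2$, so $b_T = \sum_{j=1}^T \beta_2^{T-j} a_j \leq R^2 \sum_{k\geq 0}\beta_2^k = \frac{R^2}{1-\beta_2}$, and by monotonicity of the logarithm $\ln\!\big(1 + \tfrac{b_T}{\epsilon}\big) \leq \ln\!\big(1 + \tfrac{R^2}{\epsilon(1-\beta_2)}\big)$, which gives exactly the stated inequality. The only genuinely nontrivial point — and the source of the extra additive $T\ln(\tfrac{1}{\beta_2})$ rather than a clean telescoping sum — is the $\beta_2$ discount inside the denominator of the logarithm handled in the second step; everything else is bookkeeping. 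This reproduces Lemma 5.2 of \citep{defossez2022a}, the only "extension" being that it is phrased for a generic nonnegative sequence $a_t \leq R^2$, with the argument unchanged.
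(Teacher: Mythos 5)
Your proof is correct and follows essentially the same route as the paper's: the same recursion $a_t = b_t - \beta_2 b_{t-1}$, the same elementary log inequality (your $1-u \leq \ln(1/u)$ is the paper's $1-z \leq e^{-z}$ in disguise), the same $\epsilon + \beta_2 b_{t-1} \geq \beta_2(\epsilon+b_{t-1})$ correction producing the $T\ln(1/\beta_2)$ term, the same telescoping, and the same bound $b_T \leq R^2/(1-\beta_2)$. No gaps; it matches the paper's argument.
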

\begin{proof}
    Note $b_t > a_t \geq 0 $ and we have that $1-z \leq \exp^{-z} $ with $z = \frac{a_t}{\epsilon + b_t} < 1$. Hence, 
    \begin{align*}
        \frac{a_t}{\epsilon + b_t} & \leq \ln(\epsilon+b_t) - \ln(\epsilon + b_t - a_t)\\
        & = \ln(\epsilon+b_t) - \ln(\epsilon + \beta_2b_{t-1}) \\
        & = \ln\left(\frac{\epsilon+b_t}{\epsilon+b_{t-1}}\right) + \ln\left(\frac{\epsilon+ b_{t-1} }{\epsilon + \beta_2b_{t-1}}\right)\\
        & \leq \ln\left(\frac{\epsilon+b_t}{\epsilon+b_{t-1}}\right) + \ln\left(\max\{1,\frac{1}{\beta_2}\}\right) \\
        & = \ln\left(\frac{\epsilon+b_t}{\epsilon+b_{t-1}}\right) + \ln\left(\frac{1}{\beta_2}\right),
    \end{align*}
    where the first equality is due to definition of $b_t$. Summing the inequality above for $t = 1,2, \ldots, T$, yields that (recall that $b_0 = 0$)

    
    \begin{align*}
        \sum_{t=1}^T\frac{a_t}{\epsilon + b_t} \leq \ln\left(1 + \frac{b_T}{\epsilon}\right) + T \ln\left(\frac{1}{\beta_2}\right)
    \end{align*}
    Furthermore, we have 
    \begin{align*}
        \sum_{t=1}^T\sqrt{\frac{a_t}{\epsilon + b_t} } & \leq \sqrt{T} \sqrt{\sum_{t=1}^T\frac{a_t}{\epsilon + b_t} }\\
        & \leq \sqrt{T} \sqrt{\ln\left(1 + \frac{b_T}{\epsilon}\right) + T \ln\left(\frac{1}{\beta_2}\right)} \\
        & \leq \sqrt{T \ln\left(1 + \frac{b_T}{\epsilon}\right)} + T \sqrt{\ln\left(\frac{1}{\beta_2}\right)}
    \end{align*}
    Also note that $b_T \leq \frac{R^2}{1-\beta_2}$.
\end{proof}
Using Lemma~\ref{thm:lemma2} in \eqref{eq:after lemma 1}, i.e., $a_t = \nabla_i f_t(x_t)^2$ and $b_t=v_{t,i}=\sum_{j=1}^t \beta_2^{t-j}a_j$, and dividing both sides by $T$, and after some algebra we have,
\begin{align*}
    \frac{1}{T}\sum_{t=1}^T \bbE \|\nabla F(x_t) \|^2 & \leq \frac{2R(F_0-F^*)}{\alpha T} + \frac{2Rd}{T}
    \left( \frac{2R}{\sqrt{1-\beta_2}}+\frac{\alpha L}{1-\beta_2} \right) \left( \ln\left(1 + \frac{R^2}{\epsilon(1-\beta_2)}\right) + T \ln\left(\frac{1}{\beta_2}\right) \right) \\
    & + \frac{Rd \Delta L}{T\sqrt{1-\beta_2}} \left( \sqrt{T \ln\left(1 + \frac{R^2}{\epsilon(1-\beta_2)}\right)} + T \sqrt{\ln\left(\frac{1}{\beta_2}\right)} \right).
\end{align*}
which concludes the proof.

\subsection{Proof sketch with quantized second order moment. } For \cref{thm:convergence} analysis we considered that the only quantized part is the model weights $x_t$, which was shown to be the most important to affect the empirical performance from the ablation studies in \cref{app:experiments}. Now we present a proof sketch in the case where second order moment $v_t$ is also quantized. Let $\delta_t$ denote the quantization error in computation of $v_t$. That is, $v_{t,i} = \beta_2 v_{t-1,i} + \nabla_i f_t(x_{t-1})^2 + \delta_t$. We would like to examine what happens to main lemmas under this error. For \cref{thm:lemma1}, there is no changes in the proof until \eqref{eq:lem1 A1}. For \eqref{eq:lem1 A1} we have $\epsilon+\tilde v \geq \bbE_{t-1}[g^2] + \delta^t$, which modifies the $R$ in \eqref{eq:lem1 A1} to $R+\delta$ where we assume $\delta$ is an upper bound on $\|\delta_t\|_2$ which depends on quantization resolution. We have the identical effect on multiplicative term $R$ in \eqref{eq:lem1 A2}. For \eqref{eq:bound on denom}, we have again loosening effect, hence, $\sqrt{\epsilon+\tilde v_{t,i}}\leq \frac{R+\delta}{\sqrt{1-\beta_2}}$. We see that the aggregated effect of $\delta_t$s are modifying $R$ constants with $R+\delta$ until \cref{thm:lemma2}. Looking at \cref{thm:lemma2}, the perturbation will require us to have a term $\hat b_t = b_t + \sum_{j=1}^t \delta_t$; for the lemma to hold we need $\hat b_t - \sum_{j=1}^t \delta_t \geq 0$ which will impose an upper bound on the $\delta_t$ (hence, on resolution of quantization for this step) in terms of $v_t$, i.e. $R$ such that it's magnitude cannot dominate $b_t$. Assuming such upper bound exists and continuing with lemma we will again end up modifying the $R^2$ term to $(R+\delta)^2$.

\subsection{Proof of \cref{thm:convergence nr}}\label{app:convergence_NR}

\begin{theorem*} [Restatement of \cref{thm:convergence nr}]

\begin{align*}
    \frac{1}{T}\sum_{t=1}^T  \bbE \|\nabla F(x_t) \|^2 & \leq \frac{2R(F_0-F^*)}{\alpha T} + \frac{2Rd}{T}
    \left( \frac{2R}{\sqrt{1-\beta_2}}+\frac{\alpha L}{1-\beta_2} \right) \left( \ln\left(1 + \frac{R^2}{\epsilon(1-\beta_2)}\right) + T \ln\left(\frac{1}{\beta_2}\right) \right) \\
    & + \frac{2Rd \Delta L}{T\sqrt{1-\beta_2}} \left( \sqrt{T \ln\left(1 + \frac{R^2}{\epsilon(1-\beta_2)}\right)} + T \sqrt{\ln\left(\frac{1}{\beta_2}\right)} \right)+ \frac{\sqrt{1-\beta_2}d(R\Delta + L\Delta^2)}{\alpha}.
\end{align*}

\end{theorem*}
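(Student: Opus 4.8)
The plan is to re-run the proof of \cref{thm:convergence} essentially verbatim, with the nearest-rounding operator $Q$ in place of $Q_{SR}$, so that the only object that changes is the per-step update perturbation. Define $\tilde r_t := Q\bigl(x_{t-1}-\alpha_t u_t\bigr) - \bigl(x_{t-1}-\alpha_t u_t\bigr)$, the counterpart of the SR perturbation $r_t$ of \cref{lem:q err}. Two facts drive everything. First, $\tilde r_t$ obeys the hard, deterministic bound $\|\tilde r_t\|_\infty \le \Delta/2$ (half a quantization level), hence $\|\tilde r_t\|_1 \le d\Delta/2$ and $\|\tilde r_t\|_2 \le \sqrt{d}\,\Delta/2$; one also still has $\|\tilde r_t\|_2^2 \le \tfrac{\Delta\alpha_t}{2}\|u_t\|_1$ exactly as in \cref{lem:q err}, since $x_{t-1}$ lies on the grid. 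Second, and crucially, $\tilde r_t$ is \emph{not} zero-mean: $\bbE[\tilde r_t]\neq 0$. Consequently the two cancellations that the unbiasedness $\bbE[r_t]=0$ produced in \eqref{eq:cond exp} — namely the vanishing of the first-order term $\nabla F(x_{t-1})^\top r_t$ and of the cross term $\alpha_t L\,u_t^\top r_t$ — no longer occur, and it is precisely these surviving terms that account for both the extra factor $2$ and the extra non-vanishing term in \cref{thm:convergence nr}.

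Concretely, I would start from the Descent Lemma with $x_t - x_{t-1} = -\alpha_t u_t + \tilde r_t$, expand $\|-\alpha_t u_t + \tilde r_t\|^2 = \alpha_t^2\|u_t\|^2 - 2\alpha_t u_t^\top\tilde r_t + \|\tilde r_t\|^2$, and then take the conditional expectation over the sampling randomness as in \cref{thm:convergence}'s proof, but now carrying three additional groups of terms: (i) the first-order bias term $\nabla F(x_{t-1})^\top\tilde r_t$, which by Assumption (ii) is at most $\|\nabla F(x_{t-1})\|_\infty\|\tilde r_t\|_1 \le Rd\Delta/2$; (ii) the quadratic cross term $-\alpha_t L\,u_t^\top\tilde r_t$, bounded by $\tfrac{\alpha_t L\Delta}{2}\|u_t\|_1 \le \tfrac{\alpha_t\sqrt{d}\,L\Delta}{2}\|u_t\|_2$, which has exactly the shape of the lone quantization term that appeared in \cref{thm:convergence} and therefore, after the identical manipulations, simply doubles that contribution; and (iii) the pure squared rounding error $\tfrac{L}{2}\|\tilde r_t\|_2^2$, which — using the hard bound $\|\tilde r_t\|_\infty\le\Delta/2$ rather than the update-dependent bound — is at most $\tfrac{Ld\Delta^2}{8}$.

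From this point the argument mirrors \cref{thm:convergence} line by line: apply \cref{thm:lemma1} to the $-\alpha_t\nabla F(x_{t-1})^\top u_t$ term, use $\sqrt{\epsilon+\tilde v_{t,i}}\le R/\sqrt{1-\beta_2}$ together with $\alpha_t\ge\alpha$ to get $\tfrac{\alpha}{2R}\|\nabla F(x_{t-1})\|^2$ on the left-hand side, sum over $t=1,\dots,T$, and bound $\sum_t\bbE\|u_t\|^2$ and $\sum_t\bbE\|u_t\|_2$ via \cref{thm:lemma2} (Cauchy--Schwarz for the latter). Contribution (ii) then merges with the original SR-shaped quantization term into the factor-$2$ line of \cref{thm:convergence nr}; contributions (i) and (iii) are $t$-independent, so they sum to $\Theta(T)$, and after dividing through by $T$ and clearing the $\tfrac{\alpha}{2R}$ rescaling they leave exactly the extra non-vanishing term $\tfrac{\sqrt{1-\beta_2}\,d(R\Delta + L\Delta^2)}{\alpha}$ of the statement — the $R\Delta$ piece being the accumulated rounding \emph{bias}, the $L\Delta^2$ piece the rounding \emph{variance}, and the $\sqrt{1-\beta_2}/\alpha$ prefactor entering through the same learning-rate bookkeeping that governs Adam's own non-vanishing gap.

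The one step that genuinely needs care — the main obstacle relative to the SR case — is the biased first-order term $\nabla F(x_{t-1})^\top\tilde r_t$: in \cref{thm:convergence} unbiasedness disposes of it for free, whereas here it must be carried all the way through, and one must check both that it does not flip the sign of the descent step (handled either by the crude $\|\nabla F\|_\infty\le R$ estimate, or, for a sharper constant, by a Young-type split against $\tfrac{\alpha}{2R}\|\nabla F(x_{t-1})\|^2$ and the denominator bound $\sqrt{\epsilon+\tilde v_{t,i}}\le R/\sqrt{1-\beta_2}$) and that its accumulation over $T$ steps stays $O(T)$ rather than growing faster. Everything else is a routine transcription of the SR argument; this is also what makes transparent why nearest rounding provably pays a strictly larger price than stochastic rounding — the bias of $Q$ is never averaged away, so it accrues additively across iterations.
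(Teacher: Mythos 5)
Your proposal follows the paper's own proof of \cref{thm:convergence nr} essentially verbatim: the same descent-lemma step with the NR perturbation $r_t$ added to the update, the same three surviving terms (the first-order bias bounded via the gradient bound $R$ and $\|r_t\|\le\sqrt{d}\,\Delta$, the cross term $\alpha_t L\,u_t^\top r_t$ which has the same shape as the SR quantization term and produces the factor $2$, and the deterministic $O(Ld\Delta^2)$ squared-rounding term), followed by the identical continuation through \cref{thm:lemma1} and \cref{thm:lemma2}; the only cosmetic difference is that you use the half-ulp bound $\|r_t\|_\infty\le\Delta/2$ where the paper uses the cruder $\|r_t\|_2\le\sqrt{d}\,\Delta$, which only tightens constants. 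One caveat: your claim that clearing the $\alpha/(2R)$ rescaling leaves \emph{exactly} the prefactor $\sqrt{1-\beta_2}/\alpha$ on the non-vanishing term does not follow from the manipulations you describe — that bookkeeping yields a $2R/\alpha$ prefactor on the $t$-independent bias/variance terms, since the $\sqrt{1-\beta_2}$ factors elsewhere enter only through $\alpha_t\le\alpha/\sqrt{1-\beta_2}$ multiplying update-dependent terms; the paper's own write-up is equally terse at this last step, so this is not a deviation from its route, but you should not present the stated constant as an exact consequence of your derivation.
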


\begin{proof}

The analysis is similar to stochastic rounding except $r$ is not random and $\|r_t\|_2\leq \sqrt{d}\Delta$.


From the smoothness of the objective function $F$, we can use the Descent Lemma to obtain that
\begin{align*}
    F(x_{t}) & \leq F(x_{t-1}) - \alpha_t \nabla F(x_{t-1})^\top \tilde u_t+ \frac{\alpha^2 L}{2}\| \tilde u_t \|^2 \\
    & = F(x_{t-1}) - \alpha_t \nabla F(x_{t-1})^\top \Big(u_t+\frac{r_t}{\alpha_t}\Big) + \frac{\alpha_t^2 L}{2}\Big\| u_t + \frac{r_t}{\alpha_t}\Big\|^2
\end{align*}

Taking the expectation of both sides with respect to sampling randomness (conditioned on $l_1, \ldots, f_{t-1}$), we have:
\begin{align*} 
    \bbE_{t-1} & F(x_{t}) \leq F(x_{t-1}) - \alpha_t \sum_{i \in [d]} \bbE_{t-1}\left[\nabla_i F(x_{t-1})\frac{\nabla_i f_t(x_{t-1})}{\sqrt{\epsilon + v_{t,i} }} \right] - \nabla F(x_{t-1})^\top r_t + \frac{\alpha_t^2 L}{2} \bbE_{t-1} \Big\| u_t + \frac{r_t}{\alpha_t}\Big\|^2 \\
    &  = F(x_{t-1}) - \alpha_t \sum_{i \in [d]} \bbE_{t-1}\left[\nabla_i F(x_{t-1})\frac{\nabla_i f_t(x_{t-1})}{\sqrt{\epsilon + v_{t,i} }} \right] - \nabla F(x_{t-1})^\top r_t + \frac{\alpha_t^2 L}{2}  \bbE_{t-1} \| u_t \|^2 + \alpha_tL \bbE_{t-1} [u_t^\top r_t] + \frac{L}{2} \| r_t \|^2 \\
    & \leq F(x_{t-1}) - \alpha_t \sum_{i \in [d]} \bbE_{t-1}\left[\nabla_i F(x_{t-1})\frac{\nabla_i f_t(x_{t-1})}{\sqrt{\epsilon + v_{t,i} }} \right] - \nabla F(x_{t-1})^\top r_t + \frac{\alpha_t^2 L}{2}  \bbE_{t-1} \| u_t \|^2 + \alpha_tL\bbE_{t-1}[\|u_t\|_2]\|r_t\| + \frac{L}{2} \| r_t \|^2 \\
    & \leq F(x_{t-1}) - \alpha_t \sum_{i \in [d]} \bbE_{t-1}\left[\nabla_i F(x_{t-1})\frac{\nabla_i f_t(x_{t-1})}{\sqrt{\epsilon + v_{t,i} }} \right] + \|\nabla F(x_{t-1}) \|\|r_t \| + \frac{\alpha_t^2 L}{2}  \bbE_{t-1} \| u_t \|^2 + \alpha_tL\bbE_{t-1}[\|u_t\|_2]\|r_t\| + \frac{L}{2} \| r_t \|^2. 
\end{align*}

Using the fact that $\|r_t\| \leq \sqrt{d}\Delta$ and $l_\infty$ bound assumption, we have:

\begin{align} \label{eq:rne err}
    \bbE_{t-1} F(x_{t}) \leq F(x_{t-1}) - \alpha_t \sum_{i \in [d]} \bbE_{t-1}\left[\nabla_i F(x_{t-1})\frac{\nabla_i f_t(x_{t-1})}{\sqrt{\epsilon + v_{t,i} }} \right] + Rd\Delta  + {\alpha_t^2 L}  \bbE_{t-1} \| u_t \|^2 + \alpha_tL\sqrt{d}\Delta \bbE_{t-1}\|u_t\|_2+ Ld\Delta^2.
\end{align} 

Remark that the last term in \eqref{eq:rne err} is due to variance of the nearest rounding and other error terms are due to bias. We observe that although variance is low ($\mathcal{O}(\Delta^2)$ dependence) the bias terms result in higher error compared to the proof with SR. Continuing as in proof of \cref{thm:convergence} we obtain the end result:

\begin{align*}
    \frac{1}{T}\sum_{t=1}^T \bbE \|\nabla F(x_t) \|^2 & \leq \frac{2R(F_0-F^*)}{\alpha T} + \frac{2Rd}{T}
    \left( \frac{2R}{\sqrt{1-\beta_2}}+\frac{\alpha L}{1-\beta_2} \right) \left( \ln\left(1 + \frac{R^2}{\epsilon(1-\beta_2)}\right) + T \ln\left(\frac{1}{\beta_2}\right) \right) \\
    & + \frac{2Rd \Delta L}{T\sqrt{1-\beta_2}} \left( \sqrt{T \ln\left(1 + \frac{R^2}{\epsilon(1-\beta_2)}\right)} + T \sqrt{\ln\left(\frac{1}{\beta_2}\right)} \right)+ \frac{\sqrt{1-\beta_2}d(R\Delta + L\Delta^2)}{\alpha}.
\end{align*}

\end{proof}



\section{Additional Details and Results for Experiments} \label{app:experiments}

\subsection{Training hyper-parameters and details. }

For all our experiments we use AWS \textit{p4d.24xlarge} instances each with 8 A100-40GB GPUs. For GPT-2 (350M, 770M) we train, on a single node, for 49B and 46B tokens with 12 and 7 micro batch sizes and 480 and 448 global batch sizes respectively. For GPT-Neo (1.3B), we train for 40B tokens with 1024 global batch size and 16 micro batch sizes on 4 nodes; for GPT-Neo (2.7B, 6.7B) we train for 20B tokens with 512 global batch size and 8 micro batch sizes on 8 nodes. For GPT-2 we do not use tensor parallelism (TP), and for GPT-Neo models we use TP=8.

\textbf{Tuning the learning rate. } For GPT-2 models we use 2000 warm-up iterations and for GPT-Neo 200 iterations as recommended. We utilize cosine scheduling. We tune the maximum learning rate individually for each method starting from the recommended/default learning rates, we leave the minimum learning rate as default. For GPT-2 models we search the learning rate in $\{2,3,4,5,6,7,8\}\times 10^{-4}$. For GPT-Neo (1.2B,2.7B) we search in 
$\{1.6,3.2,4,5\}\times 10^{-4}$, and GPT-Neo (6.7B) $\{1.2,3.6,5.5\}\times 10^{-4}$.

\subsection{Ablation studies. }

\textbf{Toy example in \cref{sec:lr with sr}}

\begin{figure}[h]
\begin{minipage}{.5\textwidth}
        \centering
    \includegraphics[width=1.0\linewidth]{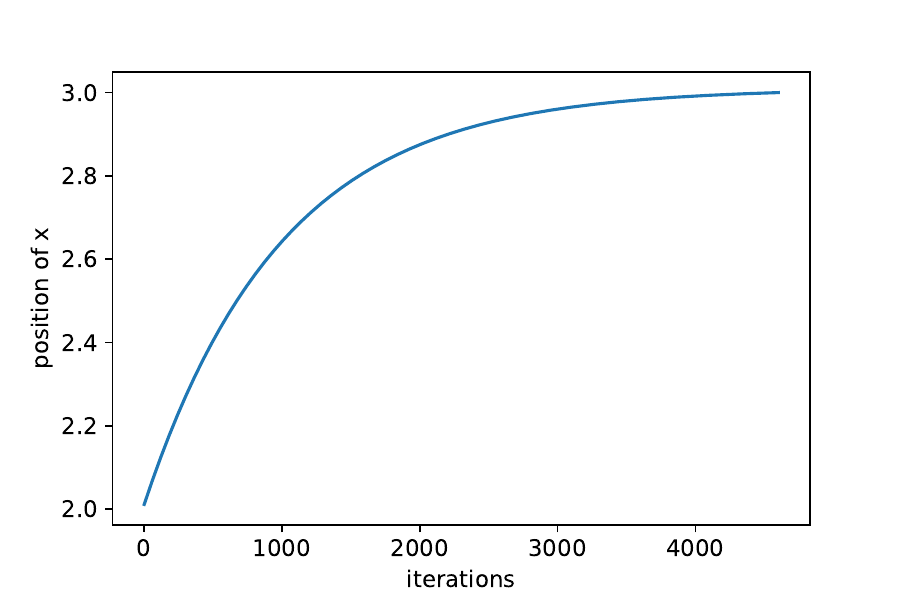}
    \end{minipage}
\begin{minipage}{.5\textwidth}
        \centering
        \includegraphics[width=1.0\linewidth]{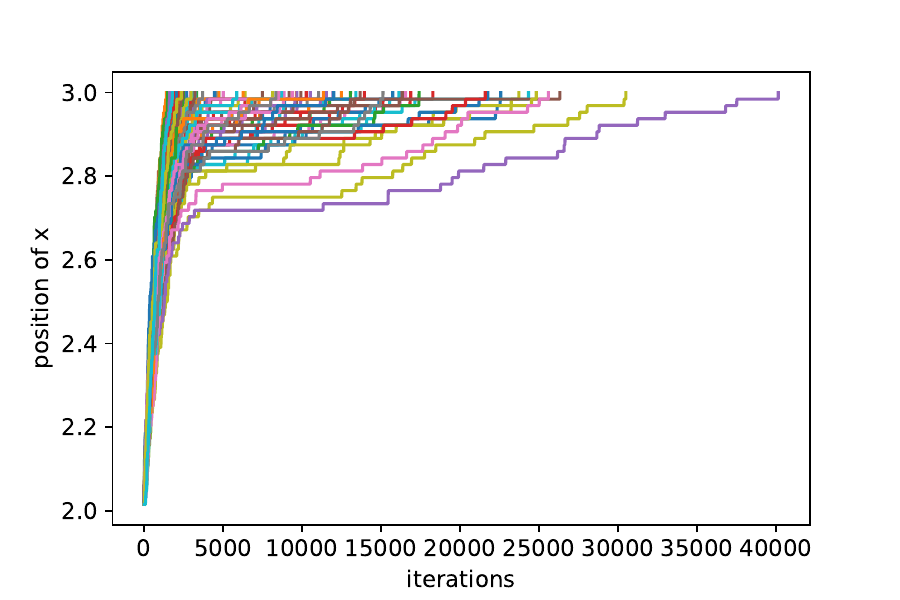}
\end{minipage}    
    \caption{Position of $x$ vs. number of iterations for FP32 updates (Left) and BF16+SR updates (Right), each position curve represents the position vs iterations for an individual run. With the decaying updates SR can take a longer time to converge, here FP32 converges in 4600 steps and SR converges in on average 7700 (over 100 runs). We observe that an individual run with SR can converge in as much as 40,000 iterations.} \label{fig:toy example} 
\end{figure}

\textbf{Effect of full precision optimizer states and gradients.}

\begin{figure}[h]
\begin{minipage}{.5\textwidth}
        \centering
    \includegraphics[width=1.0\linewidth]{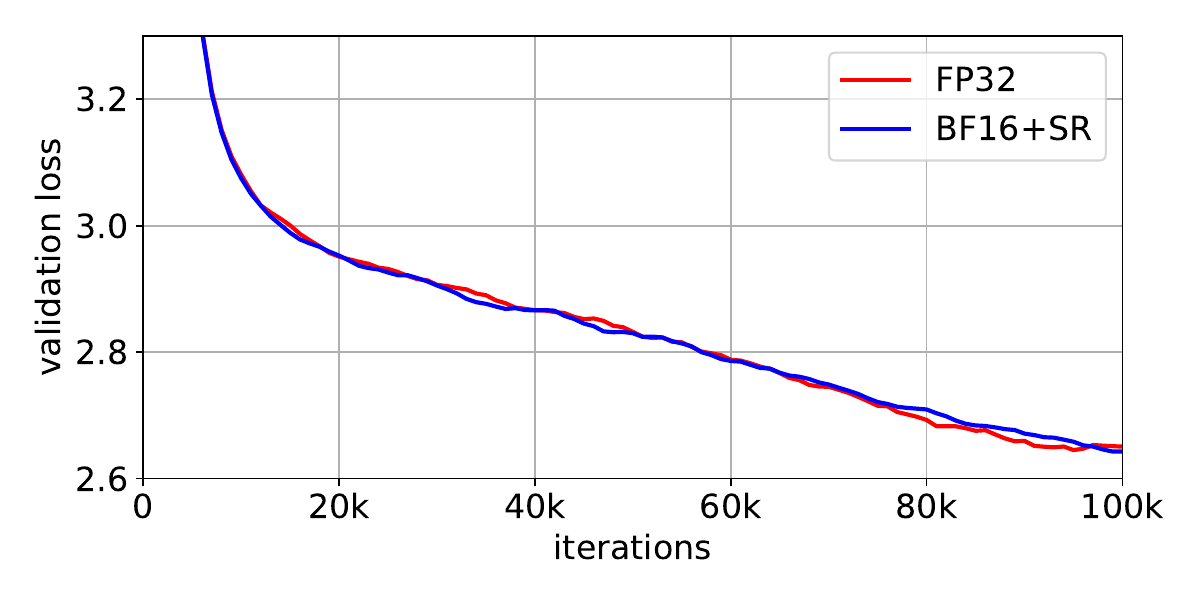}
    \end{minipage}
\begin{minipage}{.5\textwidth}
        \centering
        \includegraphics[width=1.0\linewidth]{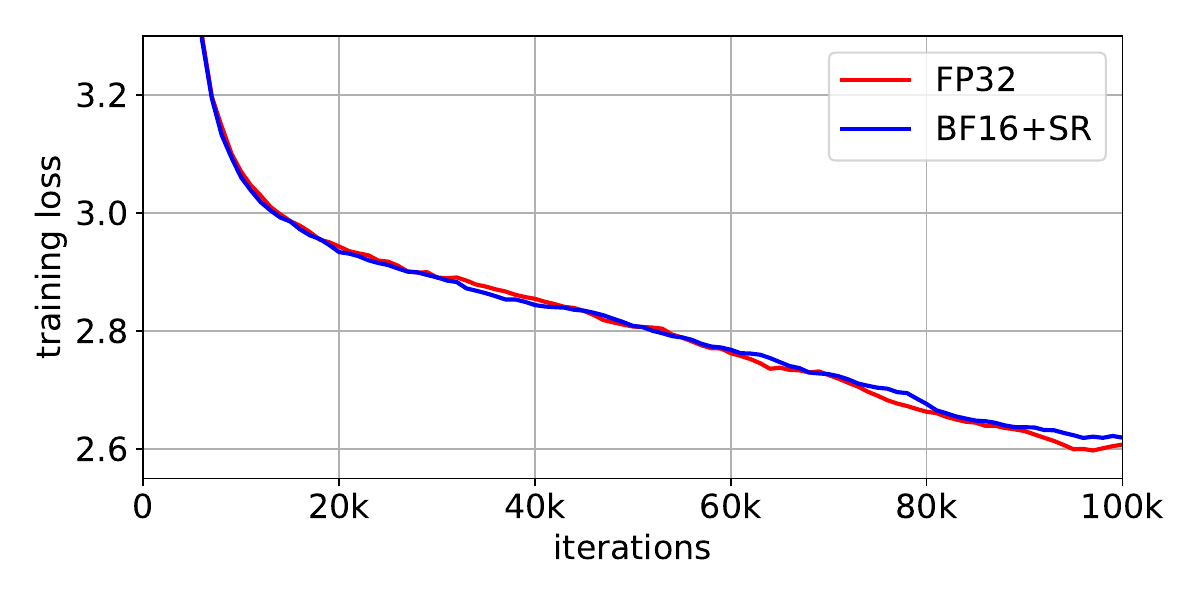}
\end{minipage}    
    \caption{Training and validation loss curves comparing FP32 training and BF16+SR strategy while training GPT-2 (350M).}\label{fig:fp32 vs bf16sr}
\end{figure}

In \cref{fig:fp32 vs bf16sr} we ablate the precisions of optimizer states, gradients and the update. In particular, every variable is in BF16 in our experiments for \cref{alg}; here, the ablation indicates that the effect of keeping optimizer states and gradients in higher precision does not affect the performance significantly. 

\subsection{Loss curves. }

\begin{figure}[h]
\begin{minipage}{.33\textwidth}
        \centering
        \includegraphics[width=1.0\linewidth]{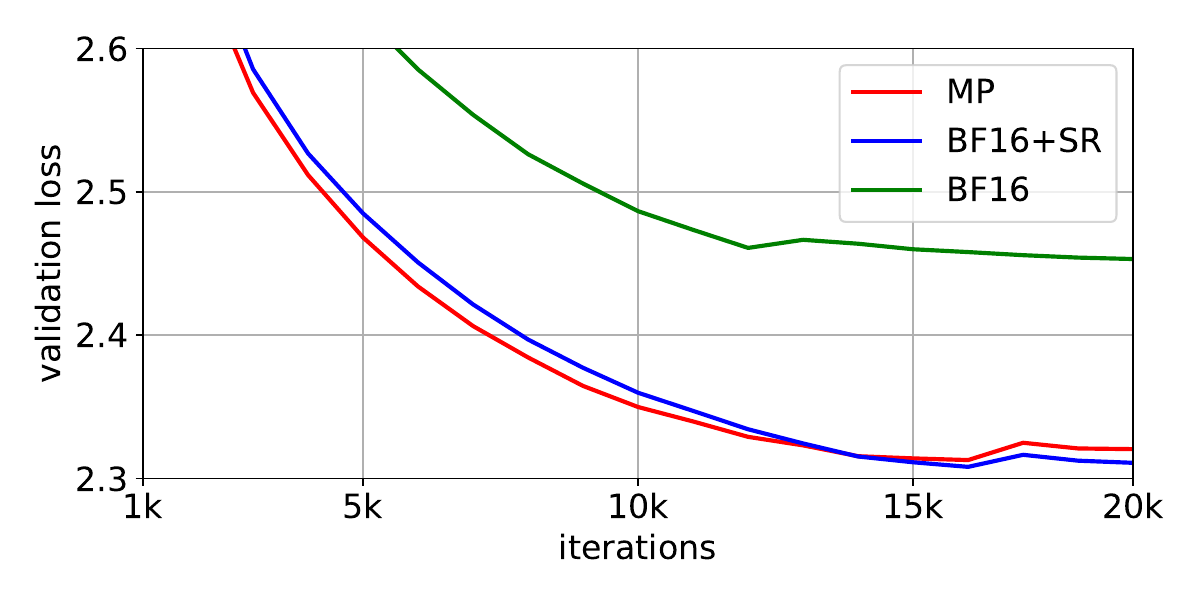}
\end{minipage}    
\begin{minipage}{.33\textwidth}
        \centering
    \includegraphics[width=1.0\linewidth]{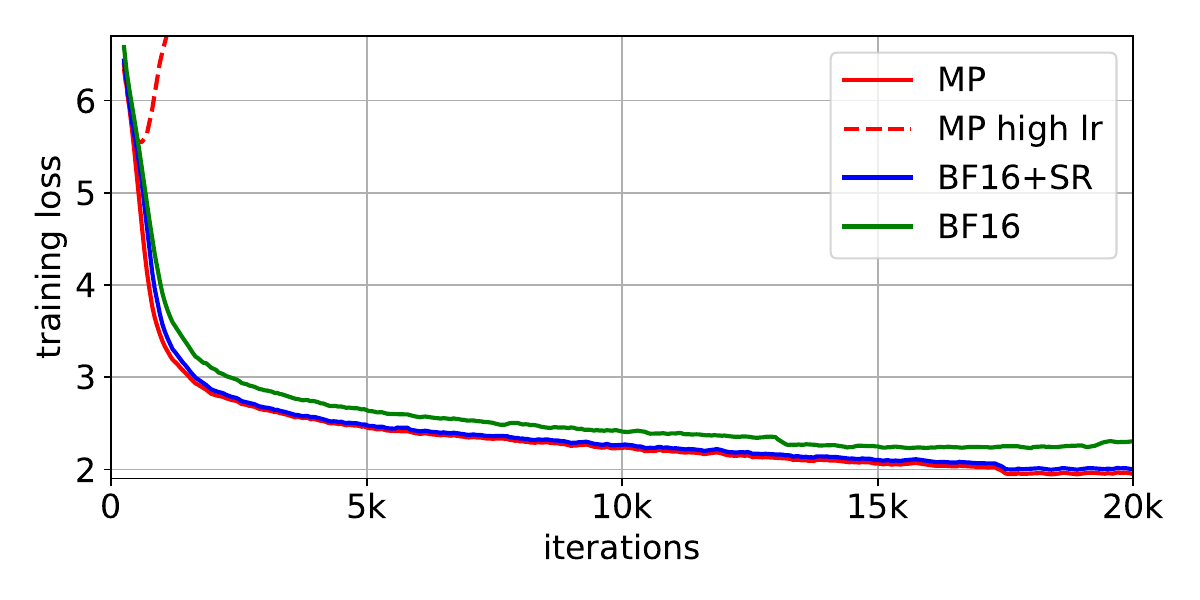}
    \end{minipage}
\begin{minipage}{.33\textwidth}
        \centering
    \includegraphics[width=1.0\linewidth]{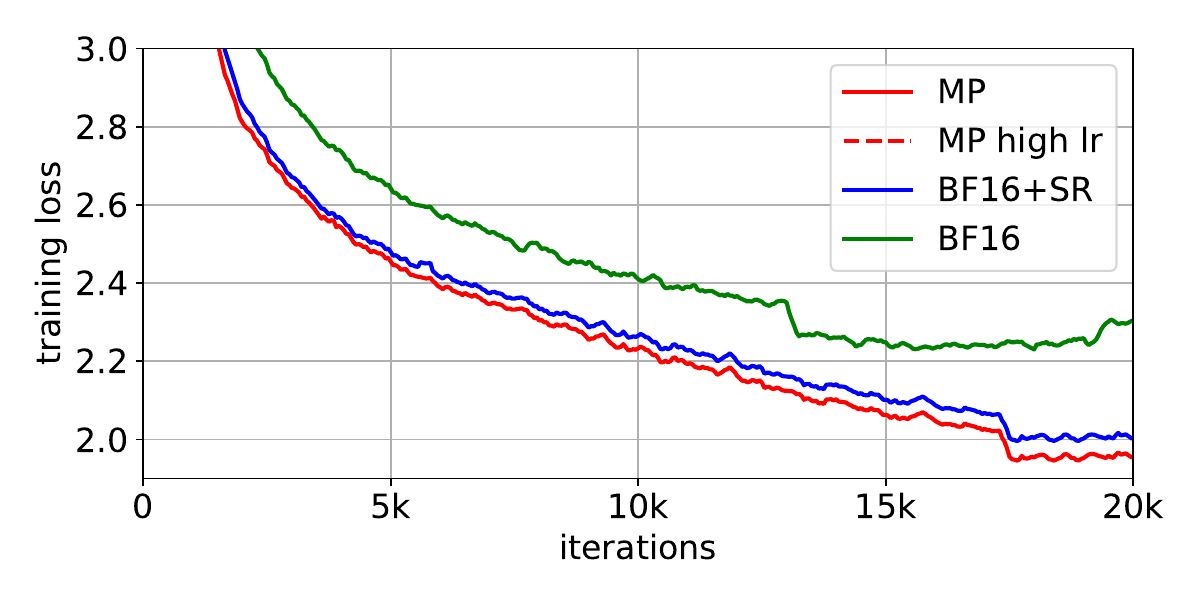}
    \end{minipage}
    \caption{Validation (left) and training losses (middle--zoomed out, right--zoomed in) for training GPT-Neo (6.7B).}\label{fig:6-7b}
\end{figure}

\begin{figure}[h]
\begin{minipage}{.5\textwidth}
        \centering
    \includegraphics[width=1.0\linewidth]{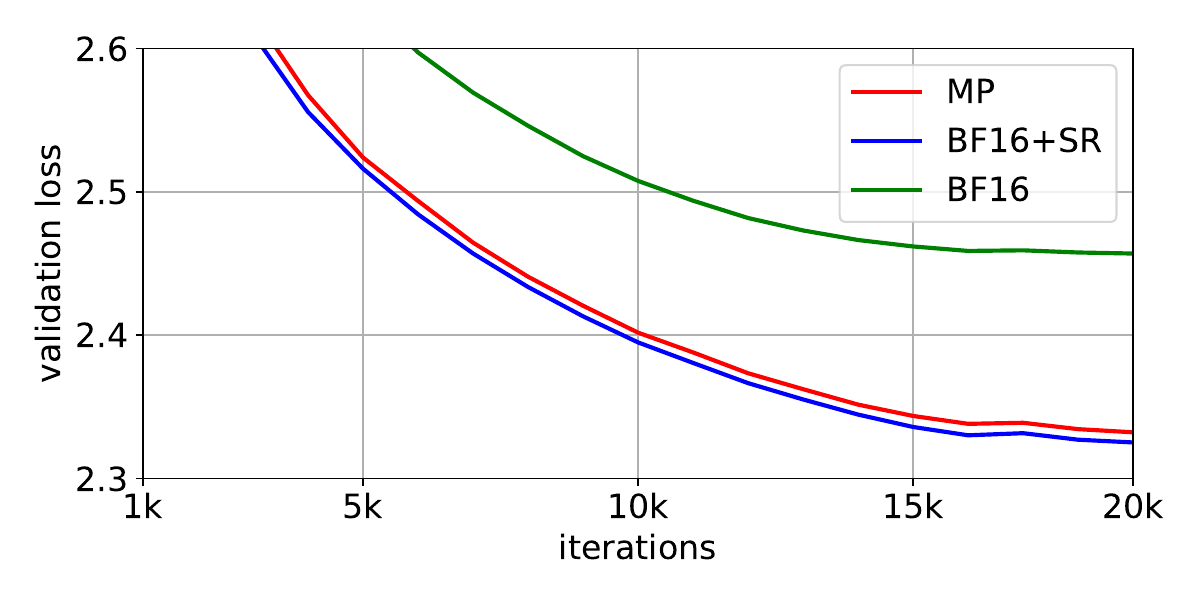}
    \end{minipage}
\begin{minipage}{.5\textwidth}
        \centering
    \includegraphics[width=1.0\linewidth]{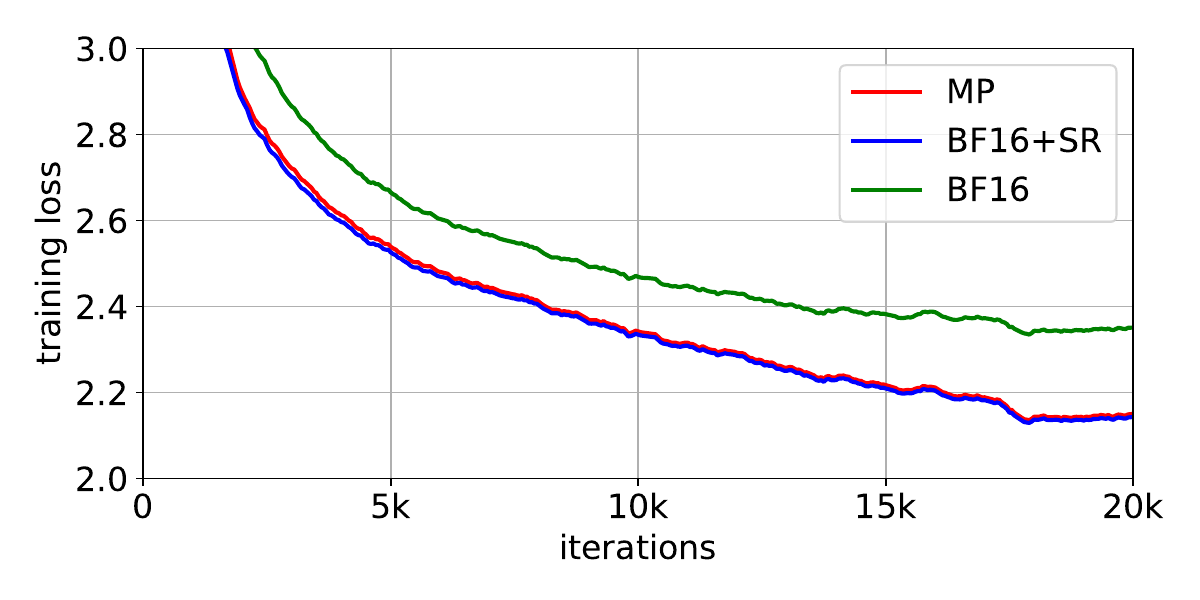}
    \end{minipage}
    \caption{Validation (left) and training losses (right) for training GPT-Neo (2.7B).}\label{fig:2-7b}
\end{figure}

\begin{figure}[h]
\begin{minipage}{.5\textwidth}
        \centering
    \includegraphics[width=1.0\linewidth]{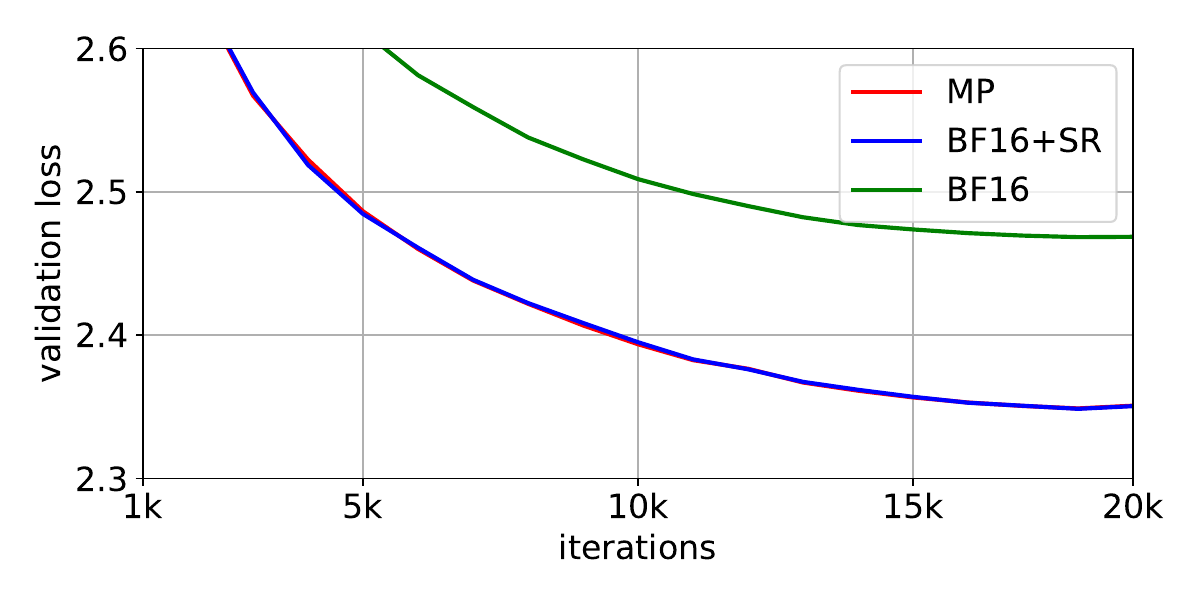}
    \end{minipage}
\begin{minipage}{.5\textwidth}
        \centering
    \includegraphics[width=1.0\linewidth]{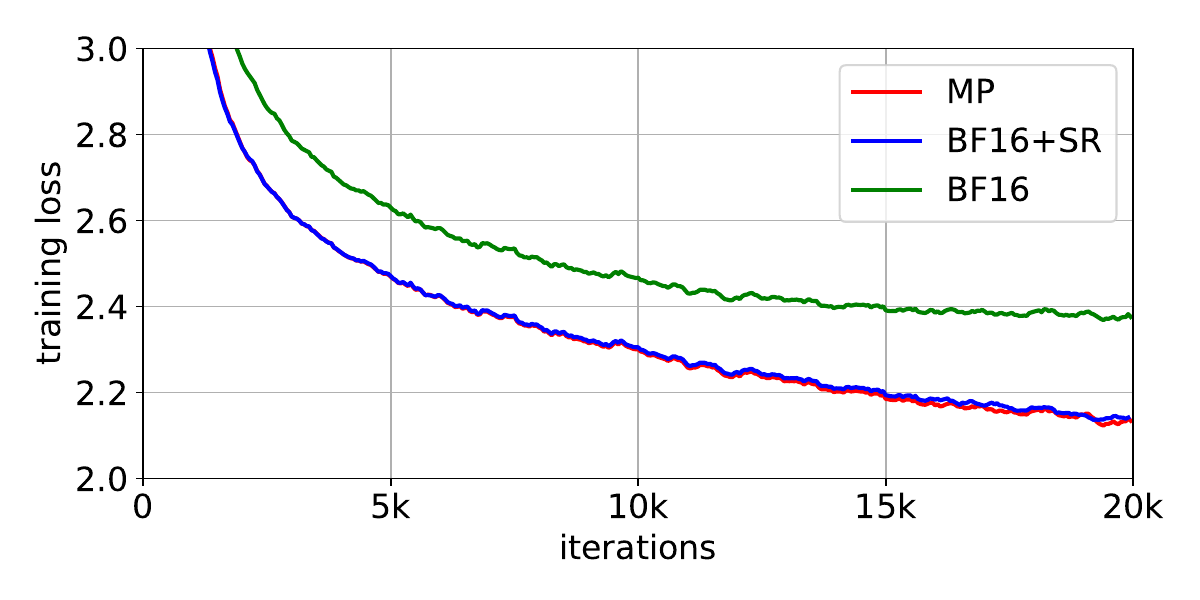}
    \end{minipage}
    \caption{Validation (left) and training losses (right) for training GPT-Neo (1.3B).}\label{fig:1-3b}
\end{figure}

\begin{figure}[h]
\begin{minipage}{.5\textwidth}
        \centering
    \includegraphics[width=1.0\linewidth]{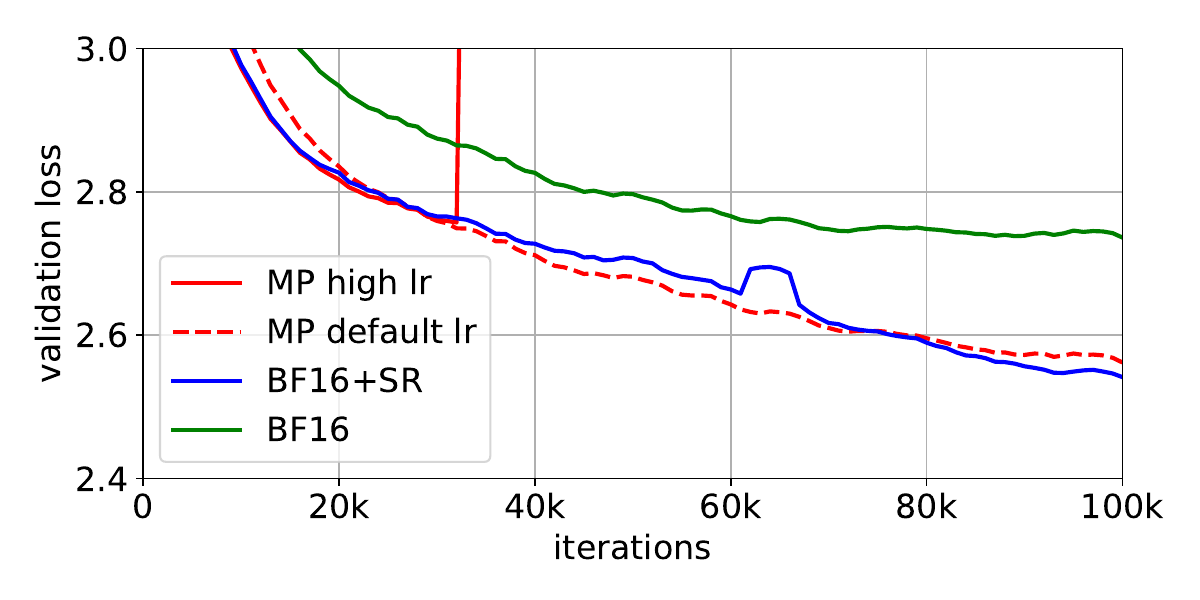}
    \end{minipage}
\begin{minipage}{.5\textwidth}
        \centering
    \includegraphics[width=1.0\linewidth]{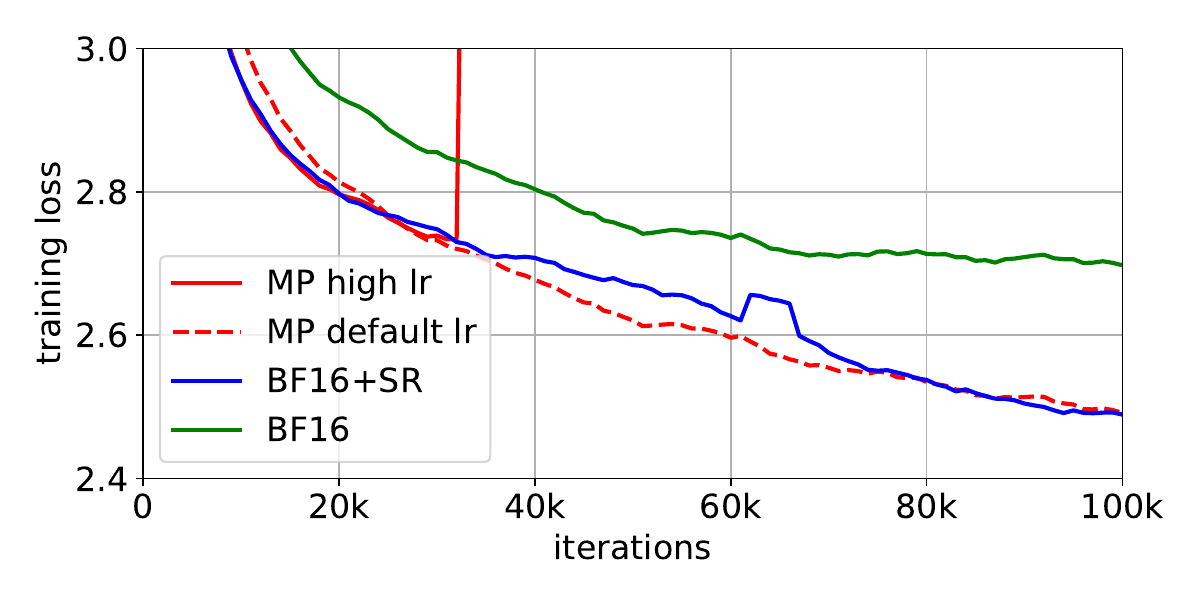}
    \end{minipage}
    \caption{Validation (left) and training losses (right) for training GPT-2 (770M).}\label{fig:770m}
\end{figure}

The convergence plots show BF16+SR training can converge to a better local minimum in terms of validation perplexity. Note that BF16+SR further gain can be provided when we equate the wall clock running times.

\end{document}